\newtheorem{theorem}{Theorem}
\newtheorem{assumption}{Assumption}
\newtheorem{definition}{Definition}
\newtheorem{remark}{Remark}
\newtheorem{lemma}{Lemma}
\title{Offline Imitation Learning with Variational Counterfactual Reasoning}
\author{%
  Zexu Sun$^{\S}$, Bowei He$^{\dagger}$, Jinxin Liu$^{\ddagger}$, Xu Chen$^{\S}$\thanks{Corresponding author}, Chen Ma$^{\dagger}$, Shuai Zhang$^{\P}$   \\
  $^{\S}$Gaoling School of Artificial Intelligence, Renmin University of China\\
  $^{\dagger}$Department of Computer Science, City University of Hong Kong\\
  $^{\ddagger}$School of Engineering, Westlake University\quad $^{\P}$DiDi Chuxing\\
  \texttt{\{sunzexu21, xu.chen\}@ruc.edu.cn}, \texttt{boweihe2-c@my.cityu.edu.hk} \\ \texttt{liujinxin@westlake.edu.cn}, \texttt{chenma@cityu.edu.hk}, \texttt{shuai.zhang@tju.edu.cn}
}
\begin{document}

\maketitle

\begin{abstract}

In offline imitation learning (IL), an agent aims to learn an optimal expert behavior policy without additional online environment interactions. 
However, in many real-world scenarios, such as robotics manipulation, the offline dataset is collected from suboptimal behaviors without rewards. 
Due to the scarce expert data, the agents usually suffer from simply memorizing poor trajectories and are vulnerable to the variations in the environments, lacking the capability of generalizing to new environments.
To automatically generate high-quality expert data and improve the generalization ability of the agent, we propose a framework named \underline{O}ffline \underline{I}mitation \underline{L}earning with \underline{C}ounterfactual data \underline{A}ugmentation (OILCA) by doing counterfactual inference. 
In particular, we leverage identifiable variational autoencoder to generate \textit{counterfactual} samples for expert data augmentation. We theoretically analyze the influence of the generated expert data and the improvement of generalization. 
Moreover, we conduct extensive experiments to demonstrate that our approach significantly outperforms various baselines on both \textsc{DeepMind Control Suite} benchmark for in-distribution performance and \textsc{CausalWorld} benchmark for out-of-distribution generalization. Our code is available at \url{https://github.com/ZexuSun/OILCA-NeurIPS23}.
\end{abstract}

\section{Introduction}\label{sec:intro}

By utilizing the pre-collected expert data, imitation learning (IL) allows us to circumvent the difficulty in designing proper rewards for decision-making tasks and learning an expert policy. 
Theoretically, as long as adequate expert data are accessible, we can easily learn an imitator policy that maintains a sufficient capacity to approximate the expert behaviors~\cite{ross2011reduction,sun2019provably,spencer2021feedback,swamy2021moments}. 
However, in practice, several challenging issues hinder its applicability to practical tasks. In particular, expert data are often limited, and due to the typical requisite for online interaction with the environment, performing such online IL may be costly or unsafe in real-world scenarios such as self-driving or industrial robotics~\cite{kim2016socially,wu2020efficient,zhao2022collective}. 
Alternatively, in such settings, we might instead have access to large amounts of pre-collected unlabeled data, which are of unknown quality and may consist of both good-performing and poor-performing trajectories. 
For example, in self-driving tasks, a number of human driving behaviors may be available; in industrial robotics domains, one may have access to large amounts of robot data. 
The question then arises: can we perform offline IL with only limited expert data and the previously collected unlabeled data, thus relaxing the costly online IL requirements?

Traditional behavior cloning (BC)~\cite{bratko1995behavioural} directly mimics historical behaviors logged in offline data (both expert data and unlabeled data) via supervised learning. 
However, in our above setting, BC suffers from unstable training, as it relies on sufficient high-quality offline data, which is unrealistic. 
Besides, utilizing unlabeled data indiscriminately will lead to severe catastrophes: Bad trajectories mislead policy learning, and good trajectories fail to provide strong enough guidance signals for policy learning.
In order to better distinguish the effect of different quality trajectories on policy learning, various solutions are proposed correspondingly. 
For example, ORIL \cite{zolna2020offline} learns a reward model to relabel previously collected data by contrasting expert and unlabeled trajectories from a fixed dataset; DWBC \cite{xu2022discriminator} introduces a discriminator-weighted task to assist the policy learning. 
However, such discriminator-based methods are prone to overfitting and suffer from poor generalization ability, especially when only very limited expert data are provided.
\begin{wrapfigure}{r}{0.65\textwidth} 
\centering
    \includegraphics[width=1\linewidth]{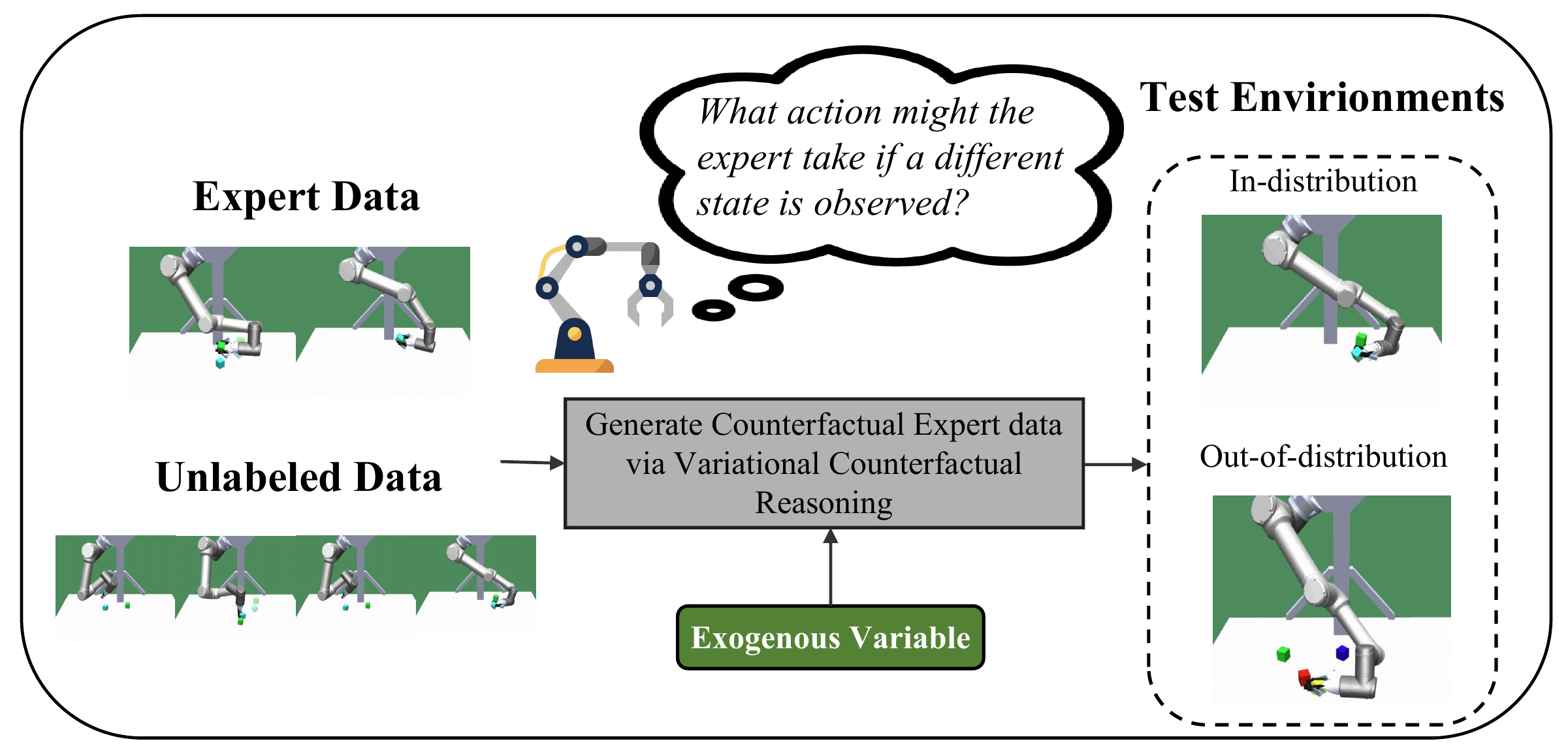}
    \caption{Agent is trained with the collected dataset containing limited expert data and large amounts of unlabeled data, and tested on both in-distribution and out-of-distribution environments.}
    \label{fig:example}
    \vspace{-10pt}
\end{wrapfigure}

Indeed, the expert data play an important role in offline IL and indicate the well-performing agent's intention. Thus, how to fully understand the expert's behaviors or preferences via limited available expert data becomes pretty crucial. In this paper, we propose to investigate counterfactual techniques for interpreting such expert behaviors.
Specifically, we leverage the \textit{variational counterfactual reasoning} \cite{pearl2000models,bottou2013counterfactual} to augment the expert data, as typical imitation learning data augmentation methods \cite{antotsiou2021adversarial,hoshino2021imitation}
easily generate noisy data and inevitably bias the learning agent.
Here, we conduct the counterfactual reasoning of the expert by answering the following question:

\vspace{5pt}
{\centerline{\textit{``What action might the expert take if a different state is observed?''}}}

Throughout this paper, we consider the structure causal model (SCM) underlying the offline training data and introduce an exogenous variable that influences the states and yet is unobserved. 
This variable is utilized as the minimal edits to existing expert training data so that we can generate counterfactual states that cause an agent to change its behavior. 
Intuitively, this exogenous variable captures variations of features in the environment. 
By introducing additional variations in the states during training, we encourage the model to rely less on the idiosyncrasies of a given environment. In detail, we leverage an identifiable generative model to generate the counterfactual expert data, thus enhancing the agent's capabilities of  generalization in test environments (Figure \ref{fig:example}).

The main contributions of this paper are summarized as follows:
\begin{enumerate}[$\bullet$]
    \item We propose a novel learning framework OILCA for offline IL. Using both training data and the augmentation model, we can generate counterfactual expert data and improve the generalization of the learned agent. 

    \item We theoretically analyze the disentanglement identifiability of the constructed exogenous variable and the influence of augmented counterfactual expert data via a sampler policy. We also guarantee the improvement of generalization ability from the perspective of error bound.

    \item We conduct extensive experiments and provide related analysis.
    The empirical results about in-distribution performance on \textsc{DeepMind Control Suite} benchmark and out-of-distribution generalization on \textsc{CausalWorld} benchmark both demonstrate the effectiveness of our method.
\end{enumerate}

\section{Related Works}\label{app:related}

\paragraph{Offline IL} A significant inspiration for this work grows from the offline imitation learning technique on how to learn policies from the demonstrations. Most of these methods take the idea of behavior cloning (BC) \cite{bratko1995behavioural} that utilizes supervised learning to learn to act. However, due to the presence of suboptimal demonstrations, the performance of BC is limited to mediocre levels on many datasets. To address this issue, ORIL \cite{zolna2020offline} learns a reward function and uses it to relabel offline trajectories. However, it suffers from high computational costs and the difficulty of performing offline RL under distributional shifts. Trained on all data, BCND \cite{sasaki2021behavioral} reuses another policy learned by BC as the weight of the original BC objective, but its performance can even be worse if the suboptimal data occupies the major part of the offline dataset. LobsDICE \cite{kim2022lobsdice} learns to imitate the expert policy via optimization in the space of stationary distributions. It solves a single convex minimization problem, which minimizes the divergence between the two-state transition distributions induced by the expert and the agent policy.
CEIL \cite{liu2023ceil} explicitly learns a hindsight embedding function together with a contextual policy. To achieve the expert matching objective for IL, CEIL advocates for optimizing a contextual variable such that it biases the contextual policy towards mimicking expert behaviors. 
DWBC \cite{xu2022discriminator} introduces an additional discriminator to distinguish expert and unlabeled demonstrations, and the outputs of the discriminator serve as the weights of the BC loss. 
CLUE \cite{liu2023clue}  proposes to learn an intrinsic reward that is consistent with the expert intention via enforcing the embeddings of expert data to a calibrated contextual representation. OILCA aims to augment the the scarce expert data to improve the performance of the learned policy.

\paragraph{Causal Dynamics RL} Adopting this formalism allows one to cast several important problems within RL as questions of causal inference, such as off-policy evaluation \cite{buesing2018woulda,oberst2019counterfactual}, learning baselines for model-free RL \cite{mesnard2020counterfactual}, and policy transfer \cite{killian2022counterfactually,lai2023chipformer}. CTRL \cite{lu2020sample} applies SCM dynamics to the data augmentation in continuous sample spaces and discusses the conditions under which the generated transitions are uniquely identifiable counterfactual samples. This approach models state and action
variables as unstructured vectors, emphasizing benefits in modeling action interventions for scenarios such as clinical healthcare where exploratory policies cannot be directly deployed. MOCODA \cite{pitis2022mocoda} applies a learned locally factored dynamics model to an augmented distribution of states and actions to generate counterfactual transitions for RL. FOCUS~\cite{zhu2022offline} can reconstruct the causal structure accurately and illustrate the feasibility of learning causal structure in offline RL. OILCA uses the identifiable generative model to infer the distribution of the exogenous variable in the causal MDP, then performs the counterfactual data augmentation to augment the scarce expert data in offline IL.

\section{Preliminaries}

\subsection{Problem Definition}

We consider the causal Markov Decision Process (MDP)~\cite{lu2020sample} with an additive noise.  
In our problem setting, we have an offline static dataset consisting of \textit{i.i.d} tuples $\mathcal{D}_{\text{all}}=\left\{s_t^i, a_t^i, s_{t+1}^i\right\}_{i=1}^{n_{\text{all}}}$ s.t. $(s_t, a_t) \sim \rho(s_t, a_t), s_{t+1} \sim f_{\varepsilon}(s_t, a_t, u_{t+1})$, where $\rho(s_t,a_t)$ is an offline state-action distribution resulting from some behavior policies, $f_{\varepsilon}(s_t, a_t, u_{t+1})$ represents the causal transition mechanism, $u_{t+1}$ is the sample of the exogenous variable $u$, which is unobserved, and $\varepsilon$ is a small permutation. 
Let $\mathcal{D}_E$ and $\mathcal{D}_U$ be the sets of expert and unlabeled demonstrations respectively, our goal is to only leverage the offline batch data $\mathcal{D}_{\text{all}}:=\mathcal{D}_E \cup \mathcal{D}_U$ to learn an optimal policy $\pi$ without any online interaction. 

\subsection{Counterfactual Reasoning}\label{sec:counter}

We provide a brief background on counterfactual reasoning. Further details can be found in \cite{pearl2000models}.
\begin{definition}[Structural Causal Model (SCM)] A structural causal model $\mathcal{M}$ over variables $\mathbf{X}=$ $\left\{X_1, \ldots, X_n\right\}$ consists of a set of independent exogenous variables $\mathbf{U}=\left\{\mathbf{u}_1, \ldots, \mathbf{u}_n\right\}$ with prior distributions $P\left(\mathbf{u}_i\right)$ and a set of functions $f_1, \ldots, f_n$ such that $X_i=f_i\left(\mathbf{P A}_i, \mathbf{u}_i\right)$, where $\mathbf{P A}_i \subset \mathbf{X}$ are parents of $X_i$. Therefore, the distribution of the SCM, which is denoted $P^{\mathcal{M}}$, is determined by the functions and the prior distributions of exogenous variables.
\end{definition}
Inferring the exogenous random variables based on the observations, we can intervene in the observations and inspect the consequences.
\begin{definition}[\textit{do}-intervention in SCM]\label{def:do} An intervention $I={do}\left(X_i:={f}_i\left(\tilde{\mathbf{P A}}_i, \mathbf{u}_i\right)\right)$ is defined as replacing some functions $f_i\left(\mathbf{P A}_i, \mathbf{u}_i\right)$ with ${f}_i\left(\tilde{\mathbf{P A}}_i, \mathbf{u}_i\right)$, where $\tilde{\mathbf{P A}}_i$ is the intervened parents of $X_i$. The intervened SCM is indicated as $\mathcal{M}^I$, and, consequently, its distribution is denoted as $P^{\mathcal{M} ; I}$.
\end{definition}
The counterfactual inference with which we can answer the \textit{what if} questions will be obtained in the following process:
\begin{enumerate}
    \item Infer the posterior distribution of exogenous variable $P\left(\mathbf{u}_i \mid \mathbf{X}=\mathbf{x}\right)$, where $\mathbf{x}$ is a set of observations. Replace the prior distribution $P\left(\mathbf{u}_i\right)$ with the posterior distribution $P\left(\mathbf{u}_i \mid \mathbf{X}=\mathbf{x}\right)$ in the SCM.
    \item We denote the resulted SCM as $\mathcal{M}_{\mathbf{x}}$ and its distribution as $P^{\mathcal{M}_{\mathbf{x}}}$, perform an intervention $I$ on $\mathcal{M}_{\mathbf{x}}$ to reach $P^{\mathcal{M}_{\mathbf{x}} ; I}$.
    \item Return the output of $P^{\mathcal{M}_{\mathbf{x}} ; I}$ as the counterfactual inference.
\end{enumerate}

\begin{figure}[!t]
    \centering
    \subfigure[SCM of causal MDP]{\includegraphics[width=0.4\linewidth]{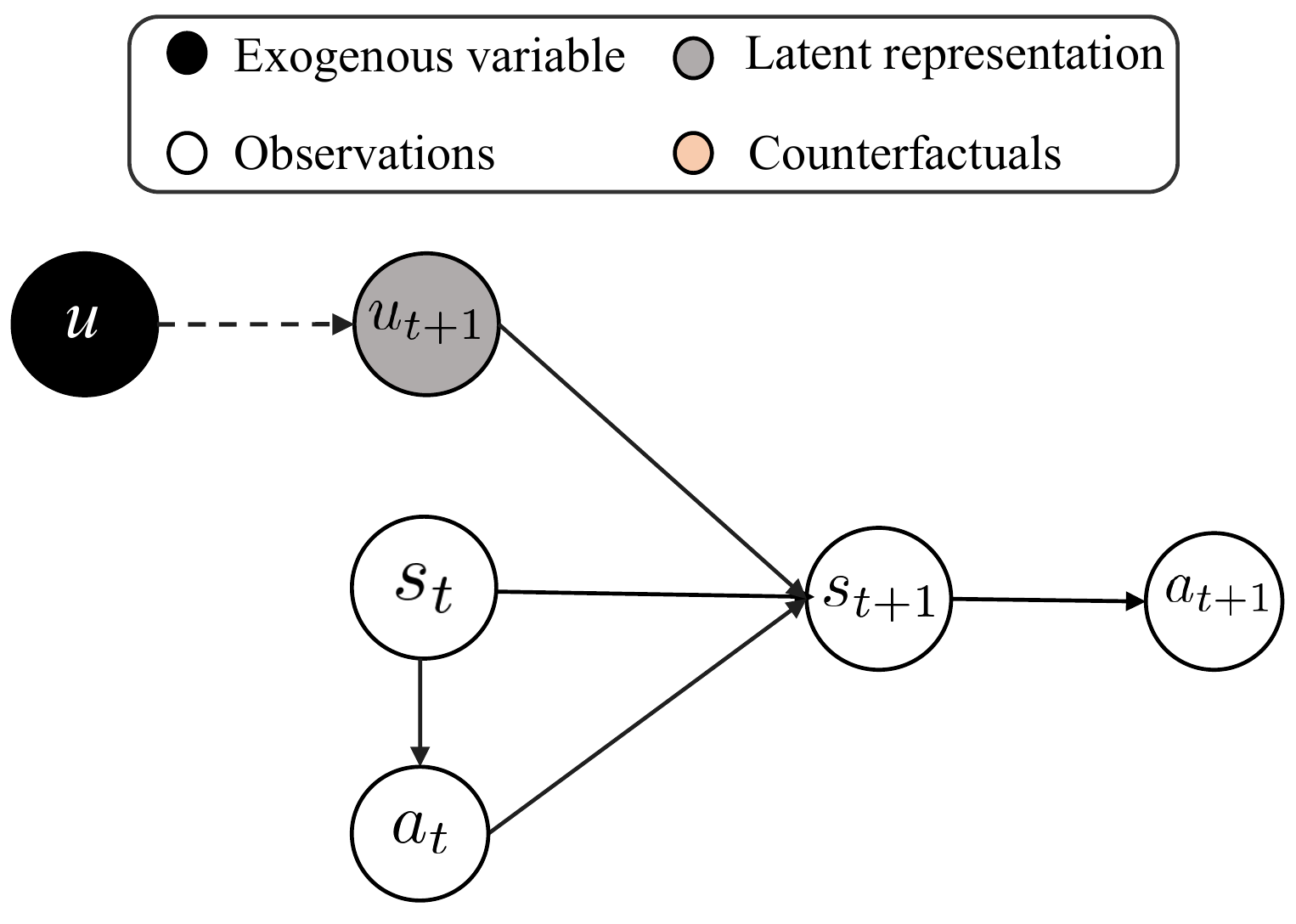} \label{fig:scm_encode}}\hspace{1cm} 
    \subfigure[SCM of causal MDP with \textit{do}-intervention]{\includegraphics[width=0.4\linewidth]{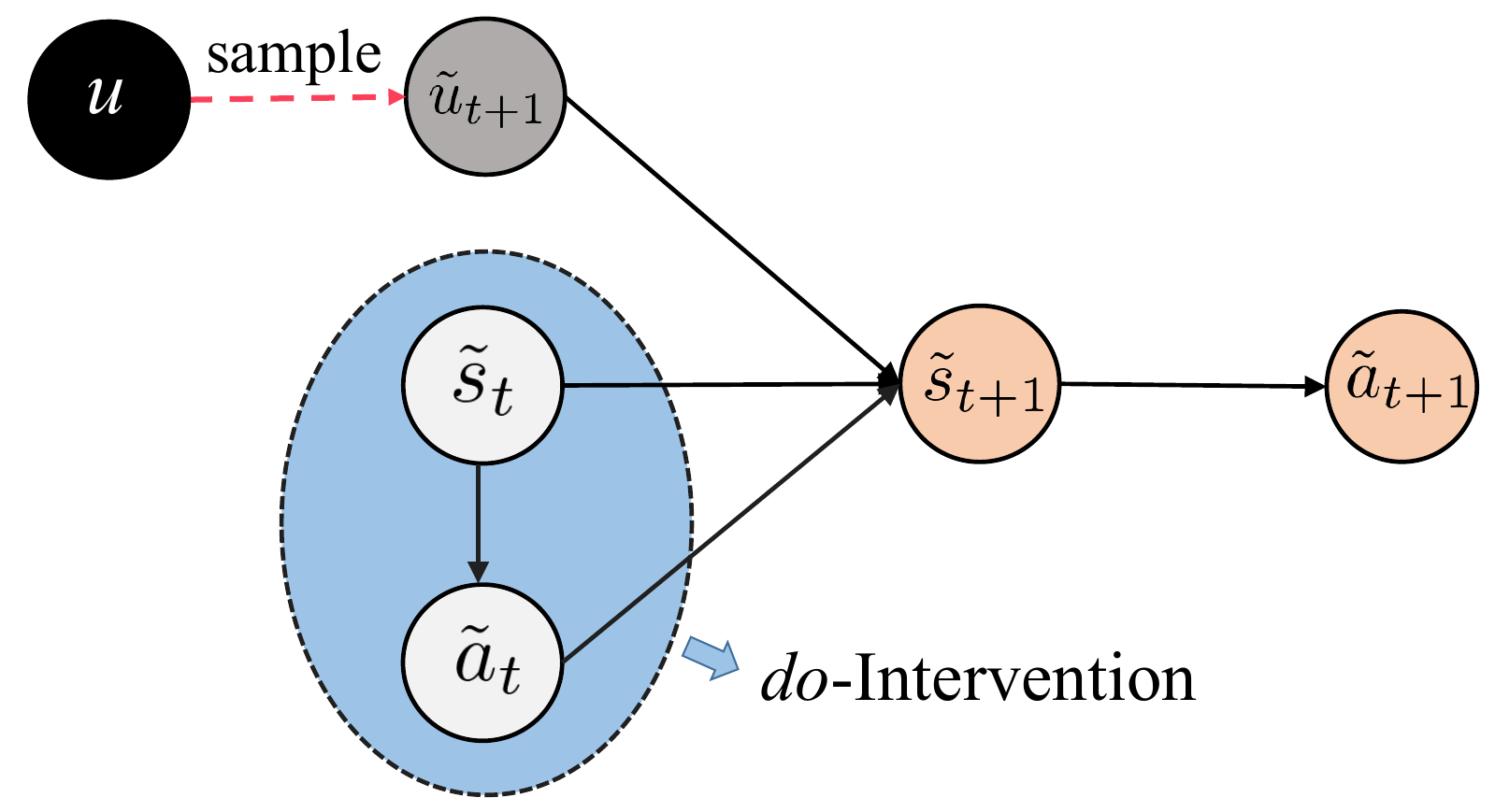}\label{fig:do}}
    \caption{SCM of causal Markov Decision Process (MDP). We incorporate an exogenous variable in the SCM that is learned and utilized for counterfactual reasoning about \textit{do}-intervention.}
    \label{fig:scm}
\end{figure}

\paragraph{SCM representation of causal MDP}We encode causal MDP (under a policy $\pi$) into an SCM $\mathcal{M}$. The SCM shown in Figure \ref{fig:scm} consists of an exogenous variable ${u}$ and a set of functions that transmit the state ${s}_{t}$, action $a_t$ and latent representation $u_{t+1}$ of $u$ to the next state $s_{t+1}$ \textit{e.g.} $s_{t+1} \sim \boldsymbol{f}_{\boldsymbol{\varepsilon}}\left(s_t, a_t, u_{t+1}\right)$, where $\boldsymbol{\varepsilon}$ is a small perturbation, and subsequently, to the next action $a_{t+1}$, \textit{e.g.} $a_{t+1}\sim \pi(\cdot\mid s_{t+1})$. 
To perform counterfactual inference on the SCM $\mathcal{M}$, we intervene in the original parents pair $({s}_t, {a}_t)$ to $(\tilde{s}_t, \tilde{a}_t)$.
Specifically, we sample the latent representation $\tilde{u}_{t+1}$ from the distribution of exogenous variable $u$, and then generate the counterfactual next state $\tilde{s}_{t+1}$, \textit{i.e.} $\tilde{s}_{t+1} \sim \boldsymbol{f}_{\boldsymbol{\varepsilon}}\left(\tilde{s}_t, \tilde{a}_t, \tilde{u}_{t+1}\right)$, and also the counterfactual next action $\tilde{a}_{t+1}$, \textit{i.e.} $\tilde{a}_{t+1} \sim \hat{\pi}(\cdot\mid \tilde{s}_{t+1})$, where $\hat{\pi}$ is the sampler policy.

\subsection{Variational Autoencoder and Identifiability}

We briefly introduce the Variational Autoencoder (VAE) and present its identifiability results. The VAE can be conceptualized as the amalgamation of a generative latent variable model and an associated inference model, both of which are parameterized by neural networks~\cite{kingma2013auto}. 
Specifically, the VAE learns the joint distribution $p_{\boldsymbol{\theta}}(\boldsymbol{x}, \boldsymbol{z})=p_{\boldsymbol{\theta}}(\boldsymbol{x}\mid \boldsymbol{z}) p_{\boldsymbol{\theta}}(\boldsymbol{z})$, where $p_{\boldsymbol{\theta}}(\boldsymbol{x} \mid \boldsymbol{z})$ represents the conditional distribution of observing $\boldsymbol{x}$ given $\boldsymbol{z}$. Here, $\boldsymbol{\theta}$ denotes the set of generative parameters, and $p_{\boldsymbol{\theta}}(\boldsymbol{z})=\Pi_{i=1}^I p_{\boldsymbol{\theta}}\left(z_i\right)$ represents the factorized prior distribution of the latent variables. By incorporating an inference model $q_\phi(\boldsymbol{z} \mid \boldsymbol{x})$, the parameters $\boldsymbol{\phi}$ and $\boldsymbol{\theta}$ can be jointly optimized by maximizing the evidence lower bound (ELBO) on the marginal likelihood $p_\theta(\boldsymbol{x})$.
\begin{equation}
\begin{aligned}
\mathcal{L} & =\mathbb{E}_{q_{\boldsymbol{\phi}}(\boldsymbol{z} \mid \boldsymbol{x})}\left[\log p_{\boldsymbol{\theta}}(\boldsymbol{x} \mid \boldsymbol{z})\right]-D_{\mathrm{KL}}\left(q_{\boldsymbol{\phi}}(\boldsymbol{z} \mid \boldsymbol{x}) \| p(\boldsymbol{z})\right) \\
& =\log p_{\boldsymbol{\theta}}(\boldsymbol{x})-D_{\mathrm{KL}}\left(q_{\boldsymbol{\phi}}(\boldsymbol{z} \mid \boldsymbol{x}) \| p_{\boldsymbol{\theta}}(\boldsymbol{z} \mid \boldsymbol{x})\right) \leq \log p_{\boldsymbol{\theta}}(\boldsymbol{x}),
\end{aligned}
\end{equation}
where $D_{\mathrm{KL}}$ denotes the KL-divergence between the approximation and the true posterior, and $\mathcal{L}$ is a lower bound of the marginal likelihood $p_{\boldsymbol{\theta}}(\boldsymbol{x})$ because of the non-negativity of the KL-divergence.
Recently, it has been shown that VAEs with unconditional prior distributions $p_{\boldsymbol{\theta}}(\boldsymbol{z})$ are not identifiable \cite{locatello2019challenging}, but the latent factors $\boldsymbol{z}$ can be identified with a conditionally factorized prior distribution $p_{\boldsymbol{\theta}}(\boldsymbol{x} \mid \boldsymbol{z})$ over the latent variables to break the symmetry~\cite{khemakhem2020variational}.


\section{Offline Imitation Learning with Counterfactual Data Augmentation}

At a high level, OILCA consists of following steps: (1) data augmentation via variational counterfactual reasoning, and (2) cooperatively learning a discriminator and a policy by using $\mathcal{D}_U$ and augmented $\mathcal{D}_E$. In this section, we detail these two steps and present our method's theoretical guarantees.

\subsection{Counterfactual Data Augmentation}
Our intuition is that the counterfactual expert data can model deeper relations between states and actions, which can help the learned policy be generalized better. Hence, in this section, we aim to generate counterfactual expert data by answering the \textit{what if} question in Section \ref{sec:intro}. Consequently, counterfactual expert data augmentation is especially suitable for some real-world settings, where executing policies during learning can be too costly or slow.

As presented in Section \ref{sec:counter}, the counterfactual data can be generated by using the posterior of the exogenous variable obtained from the observations. Thus, we can briefly introduce the conditional VAE \cite{kingma2013auto,sohn2015learning} to build the latent representation of the exogenous variable $u$. Moreover, Considering the identifiability of unsupervised disentangled representation learning \cite{locatello2019challenging}, an additionally observed variable $c$ is needed, where $c$ could be, for example, the time index or previous data points in a time series, some kind of (possibly noisy) class label, or another concurrently observed variable \cite{khemakhem2020variational}. Formally, let $\boldsymbol{\theta}=(\boldsymbol{f}, \boldsymbol{T}, \boldsymbol{\lambda})$ be the parameters of the following conditional generative model:
\begin{equation}\label{eq:model}
p_{\boldsymbol{\theta}}(s_{t+1},  u \mid s_t, a_t, c)=p_{\boldsymbol{f}}(s_{t+1} \mid   s_t, a_t, u) p_{\boldsymbol{T}, \boldsymbol{\lambda}}(u \mid  c), 
\end{equation}
where we first define:
\begin{equation}\label{eq:exo_noise}
p_{\boldsymbol{f}}(s_{t+1} \mid  s_t, a_t, u)=p_{\boldsymbol{\varepsilon}}(s_{t+1}-\boldsymbol{f}_{s_t,a_t}(u)).
\end{equation}
Equation \eqref{eq:model} describes the generative mechanism of $s_{t+1}$ given the underlying exogenous variable $u$, along with $(s_t, a_t)$. Equation \eqref{eq:exo_noise} implies that the observed representation $s_{t+1}$ is an additive noise function, \textit{i.e.}, $s_{t+1}=\boldsymbol{f}_{s_t,a_t}(u)+\boldsymbol{\varepsilon}$ where $\boldsymbol{\varepsilon}$ is independent of $\boldsymbol{f}_{s_t,a_t}$ or $u$. Moreover, this formulation also can be found in the SCM representation of causal MDP in Section \ref{sec:counter}, where we treat the function $f$ as the parameter $\boldsymbol{f}$ of the model.

Following standard conditional VAE \cite{sohn2015learning} derivation, the evidence lower bound (ELBO) for each sample for $s_{t+1}$ of the above generative model can be written as:
\begin{equation}\label{eq:elbo}
\begin{aligned} \log p(s_{t+1} \mid s_t, a_t, c)  \geq&\mathcal{L}(\boldsymbol{\theta}, \boldsymbol{\phi}):= \log p_{\boldsymbol{f}}(s_{t+1} \mid s_t, a_t,  u)+\log p(c)-\\ &  D_{\mathrm{KL}}(q_{\boldsymbol{\phi}}(u \mid s_{t}, a_{t}, s_{t+1}, c) \| p_{\boldsymbol{T}, \boldsymbol{\lambda}}(u \mid c)) .\end{aligned} 
\end{equation}

Note that the ELBO in Equation \eqref{eq:elbo} contains an additional term of the $\log p(c)$ that does not affect identifiability but improves the estimation for the conditional prior \cite{mita2021identifiable}, we present the theoretical guarantees of disentanglement identifiability in Section \ref{sec:ind}. Moreover, the encoder $q_{\boldsymbol{\phi}}$ contains all the observable variables. The reason is that, as shown in Figure \ref{fig:scm_encode}, from a causal perspective, $s_{t}, a_{t}$ and $u$ form a collider at $s_{t+1}$, which means that when given $s_{t+1}$,  $s_{t}$ and $a_{t}$ are related to $u$.

We seek to augment the expert data in $\mathcal{D}_E$. As shown in Figure \ref{fig:do}, once the posterior of exogenous variable $q_{\boldsymbol{\phi}}\left(u \mid s_t, a_t, s_{t+1}, c\right)$ is obtained, we can perform \textit{do}-intervention. 
In particular, we intervene in the parents $(s_t, a_t)$ of $s_{t+1}$ in $\mathcal{D}_E$ by re-sampling a different pair $(\tilde{s}_t, \tilde{a}_t)$ from the collected data in $\mathcal{D}_{U}$. 
Moreover, we sample the latent representation $\tilde{u}_{t+1}$ from the learned posterior distribution of exogenous variable $u$. 
Then utilizing $\tilde{s}_t$, $\tilde{a}_t$, and $\tilde{u}_{t+1}$, we can generate the counterfactual next state according to Equation \eqref{eq:exo_noise}, which is denoted as $\tilde{s}_{t+1}$. 
Subsequently, we pre-train a sampler policy $\hat{\pi}_E$ with the original $\mathcal{D}_E$ to sample the counterfactual next action $\tilde{a}_{t+1}=\hat{\pi}_E(\cdot\mid \tilde{s}_{t+1})$. 
Thus, for all the generated tuples $(\tilde{s}_{t+1},\tilde{a}_{t+1})$, we constitute them in $\mathcal{D}_E$.

\subsection{Offline Agent Imitation Learning}
In general, the counterfactual data augmentation of our method can enhance many offline IL methods. 
It is worth noting that as described in \cite{zolna2021task}, discriminator-based methods are easy to be over-fitted, which can more directly show the importance and effectiveness of the counterfactual augmented data. Thus, in this section, also to best leverage the unlabeled data, we use Discriminator-Weighted Behavioral Cloning (DWBC) \cite{xu2022discriminator}, a state-of-the-art discriminator-based offline IL method. This method introduces a unified framework to learn the policy and discriminator cooperatively. The discriminator training gets information from the policy $\pi_{\boldsymbol{\omega}}$ as additional input, yielding a new discriminating task whose learning objective is as follows:
\begin{equation}
\begin{aligned}
\mathcal{L}_{\boldsymbol{\psi}}(\mathcal{D}_E, \mathcal{D}_U)=&\eta \underset{(s_t, a_t) \sim \mathcal{D}_E}{\mathbb{E}}[-\log D_{\boldsymbol{\psi}}(s_t,a_t,\log \pi_{\boldsymbol{\omega}})]+\underset{(s'_t, a'_t) \sim \mathcal{D}_U}{\mathbb{E}}[-\log (1-D_{\boldsymbol{\psi}}(s'_t,a'_t,\log \pi_{\boldsymbol{\omega}})] \\
&-\eta \underset{(s_t, a_t) \sim \mathcal{D}_E}{\mathbb{E}}[-\log (1-D_{\boldsymbol{\psi}}(s_t,a_t,\log \pi_{\boldsymbol{\omega}}))],
\end{aligned}\label{eq:discriminator}
\end{equation}
where $D_{{\boldsymbol{\psi}}}$ is the discriminator, $\eta$ is called the class prior. In the previous works \cite{xu2021positive,zolna2020offline}, $\eta$ is a fixed hyperparameter often assigned as 0.5. 

Notice that now $\pi_{\boldsymbol{\omega}}$ appears in the input of $D_{{\boldsymbol{\psi}}}$, which means that imitation information from $\log\pi_{\boldsymbol{\omega}}$ will affect $\mathcal{L}_{{\boldsymbol{\psi}}}$, and further impact the learning of $D_{{\boldsymbol{\psi}}}$. Thus, inspired by the idea of adversarial training, DWBC \cite{xu2022discriminator} introduces a new learning objective for BC Task:
\begin{equation}
\begin{aligned}
\mathcal{L}_{\pi} =& \alpha \underset{(s_t, a_t) \sim \mathcal{D}_E}{\mathbb{E}}[-\log \pi_{\boldsymbol{\omega}}(a_t \mid s_t)]-\underset{(s_t, a_t) \sim \mathcal{D}_E}{\mathbb{E}}\left[-\log \pi_{\boldsymbol{\omega}}(a_t \mid s_t) \cdot \frac{\eta}{d(1-d)}\right]\\&+\underset{(s'_t, a'_t) \sim \mathcal{D}_U}{\mathbb{E}}\left[-\log \pi_{\boldsymbol{\omega}}(a'_t \mid s'_t) \cdot \frac{1}{1-d}\right], \quad \alpha>1.    
\end{aligned}\label{eq:policy}
\end{equation}
where $d$ represents $D_{\boldsymbol{\psi}}(s_t, a_t, \log \pi_{\boldsymbol{\omega}})$ for simplicity, $\alpha$ is the weight factor $(\alpha > 1)$. The detailed training procedure of OILCA is shown in Algorithm \ref{alg:total}. Moreover, we also present more possible combinations with other offline IL methods and the related results in Appendix \ref{app:result}.

\begin{algorithm}[!t]
  \caption{Training procedure of OILCA.}
  \label{alg:total}
\begin{algorithmic}[1]
 \Require Dataset $\mathcal{D}_E$, $\mathcal{D}_U$, $\mathcal{D}_{\text{all}}$, pre-trained sampler policy $\hat{\pi}_E$, hyperparameters $\eta$, $\alpha$, initial variational couterfactual parameters $\boldsymbol{\theta}, \boldsymbol{\phi}$, discriminator parameters ${\boldsymbol{\psi}}$, policy parameters ${\boldsymbol{\omega}}$, data augmentation batch number $B$.
 \Ensure Learned policy parameters ${\boldsymbol{\omega}}$.
\While{counterfactual training} \Comment{Variational counterfactual reasoning}
 \State Sample $(s_t, a_t, s_{t+1},c)\sim \mathcal{D}_\text{all}$ to form a training batch
 \State Update $\boldsymbol{\theta}$ and $\boldsymbol{\phi}$ according to Equation \eqref{eq:elbo}
  \EndWhile
\For{$b=1$ to $B$}\Comment{Expert data augmentation}
\State Sample $(s_t, a_t, s_{t+1},c)\sim \mathcal{D}_E$, $(\tilde{s}_t, \tilde{a}_t) \sim \mathcal{D}_{U}$ to form an augmentation batch
\State Generate the counterfactual $\tilde{s}_{t+1}$ according to Equation \eqref{eq:model}, then predict $\tilde{a}_{t+1}= \hat{\pi}_E(\cdot\mid \tilde{s}_{t+1})$ 
\State $\mathcal{D}_E \cup (\tilde{s}_{t+1}, \tilde{a}_{t+1})$ 
\EndFor
\While{agent training} \Comment{Learning the discriminator and policy cooperatively}
\State Sample $(s_t,a_t) \sim \mathcal{D}_E$ and $(s'_t,a'_t) \sim \mathcal{D}_U$  to form a training batch
  \State Update ${\boldsymbol{\psi}}$ according Equation \eqref{eq:discriminator} every 100 training steps \Comment{Discriminator learning} 
  \State Update $\boldsymbol{\omega}$ according to Equation \eqref{eq:policy}  every 1 training step \Comment{Policy learning}
  \EndWhile
\end{algorithmic}
\end{algorithm}

\subsection{Theoretical Analysis}\label{sec:ind}

In this section, we theoretically analyze our method, which mainly contains three aspects: (1) disentanglement identifiability, (2) the influence of the augmented data, and (3) the generalization ability of the learned policy.

Our disentanglement identifiability extends the theory of iVAE \cite{khemakhem2020variational}. To begin with, some assumptions are needed. Considering the SCM in Figure \ref{fig:scm_encode}, we assume the data generation process in Assumption \ref{ass:data_gen}.

\begin{assumption}\label{ass:data_gen}
(a) The distribution of the exogenous variable $u$ is independent of time but dependent on the auxiliary variable $c$. (b) The prior distributions of exogenous variable $u$ are different across auxiliary variable $c$.
(c) The trasition mechanism $p(s_{t+1}\mid s_t,a_t,u)$ are invariant across different auxiliary variable $c$. (d) Given the exogenous variable $u$, the next state $s_{t+1}$ is independent of the auxiliary variable $c$. i.e. $s_{t+1}  \upmodels c\mid u$. 
\end{assumption}

Part of the above assumption is also used in \cite{lu2020sample}; considering the identifiability of iVAE, we also add some necessary assumptions, which are also practical. In the following discussion, we will show that when the underlying data-generating mechanism satisfies Assumption \ref{ass:data_gen}, the exogenous
variable $u$ can be identified up to permutation and affine transformations if the conditional prior
distribution $p(u | c)$ belongs to a general exponential family distribution.
\begin{assumption}\label{ass:dist}
The prior distribution of the exogenous variable $p(u| c)$ follows a general exponential family with its parameter specified by an arbitrary function $\boldsymbol{\lambda}  (c)$ and sufficient statistics $\boldsymbol{T}(u)=[\boldsymbol{T}_{\boldsymbol{d}}(u), \boldsymbol{T}_{NN}(u)]$, here $\boldsymbol{T}_{\boldsymbol{d}}(u)$ is defined by the concatenation of $\boldsymbol{T}_{\boldsymbol{d}}(u)=[\boldsymbol{T}_1(u^1)^T, \cdots, \boldsymbol{T}_d(u^d)^T]^T$ from a factorized exponential family and the outputs of a neural network $\boldsymbol{T}_{NN}(u)$ with universal approximation power. The probability density can be written as:
\begin{equation}
    p_{\boldsymbol{T},\boldsymbol{\lambda}}(u \mid  c) = \frac{\boldsymbol{Q}(u)}{\boldsymbol{Z}(u)}\exp \left[ \boldsymbol{T}(u)^T \boldsymbol{\lambda}(c)\right]. \label{eq:tlambda}
\end{equation}
\end{assumption}

Under the Assumption \ref{ass:dist}, and leveraging the ELBO in Equation \eqref{eq:elbo},  we can obtain the following identifiability of the parameters in the model. For convenience, we omit the subscript of $\boldsymbol{f}_{s_t, a_t}$ as $\boldsymbol{f}$.
\begin{theorem}
\label{thm:model_iden}
Assume that we observe data sampled from a generative model defined according to Equation \eqref{eq:model}-\eqref{eq:exo_noise} and Equation \eqref{eq:tlambda} with parameters $(\boldsymbol{f},\boldsymbol{T},\boldsymbol{\lambda})$, the following holds:
\begin{enumerate}[(i)]
    \item The set $\{s_{t+1}\in \mathcal{S}: \varphi_{\boldsymbol{\varepsilon}}(s_{t+1}=0)\}$ has measure zero, where $\varphi_{\boldsymbol{\varepsilon}}$ is the characteristic function of the density $p_{\boldsymbol{\varepsilon}}$ defined in Equation \eqref{eq:exo_noise}.
    \item The function $\boldsymbol{f}$ is injective and all of its second-order cross partial derivatives exist.
    \item The sufficient statistics $\boldsymbol{T}_{\boldsymbol{d}}$ are twice differentiable.
    \item There exist $k+1$ distinct points $c^0, \ldots, c^{k}$ such that the matrix
\begin{align}
 L=\left(\boldsymbol{\lambda}\left(c^1\right)-\boldsymbol{\lambda}\left(c^0\right), \ldots, \boldsymbol{\lambda}\left(c^{k}\right)-\boldsymbol{\lambda}\left(c^0\right)\right)   
\end{align}
of size $k \times k$ is invertible.
\end{enumerate}
Then, the parameters $\boldsymbol{\theta}=(\boldsymbol{f}, \boldsymbol{T}, \boldsymbol{\lambda})$ are identifiable up to an equivalence class induced by permutation and component-wise transformations.
\end{theorem}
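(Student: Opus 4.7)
The plan is to adapt the identifiability argument of iVAE \cite{khemakhem2020variational} to our conditional additive-noise setting where $(s_t,a_t)$ play the role of auxiliary (but not identifying) variables and $c$ is the auxiliary conditioning variable breaking the symmetry. I would start by supposing we have two parameter sets $\boldsymbol{\theta}=(\boldsymbol{f},\boldsymbol{T},\boldsymbol{\lambda})$ and $\tilde{\boldsymbol{\theta}}=(\tilde{\boldsymbol{f}},\tilde{\boldsymbol{T}},\tilde{\boldsymbol{\lambda}})$ inducing the same observed conditional $p_{\boldsymbol{\theta}}(s_{t+1}\mid s_t,a_t,c)=p_{\tilde{\boldsymbol{\theta}}}(s_{t+1}\mid s_t,a_t,c)$ for all $(s_t,a_t,c)$. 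By Equation \eqref{eq:exo_noise}, each such density is a convolution of $p_{\boldsymbol{\varepsilon}}$ with the pushforward of $p_{\boldsymbol{T},\boldsymbol{\lambda}}(u\mid c)$ through $\boldsymbol{f}_{s_t,a_t}$. Taking Fourier transforms in the $s_{t+1}$ variable converts the convolution to a product, and assumption~(i) (the characteristic function $\varphi_{\boldsymbol{\varepsilon}}$ is nonzero almost everywhere) lets me divide it out, leaving an equality of the pushforward densities of $u$ and $\tilde{u}$ under $\boldsymbol{f}_{s_t,a_t}$ and $\tilde{\boldsymbol{f}}_{s_t,a_t}$ respectively.

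Next, using the injectivity of $\boldsymbol{f}$ from (ii) I would apply a change of variables $v=\boldsymbol{f}_{s_t,a_t}(u)$ to obtain an equality of densities on a common space that, after substituting the exponential-family form \eqref{eq:tlambda}, reads (schematically)
\begin{equation}
\log \boldsymbol{Q}(u) + \boldsymbol{T}(u)^{T}\boldsymbol{\lambda}(c) - \log \boldsymbol{Z}(c) + J(u) \;=\; \log \tilde{\boldsymbol{Q}}(\tilde{u}) + \tilde{\boldsymbol{T}}(\tilde{u})^{T}\tilde{\boldsymbol{\lambda}}(c) - \log \tilde{\boldsymbol{Z}}(c) + \tilde{J}(\tilde{u}),
\end{equation}
where $\tilde{u}=h(u):=\tilde{\boldsymbol{f}}_{s_t,a_t}^{-1}\circ \boldsymbol{f}_{s_t,a_t}(u)$ and $J,\tilde{J}$ are log-Jacobian terms (independent of $c$). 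Evaluating this identity at the $k+1$ distinct points $c^0,\ldots,c^k$ guaranteed by (iv) and subtracting the $c^0$ equation from each of the others eliminates every term that does not depend on $c$, yielding
\begin{equation}
L^{T}\boldsymbol{T}(u) \;=\; \tilde{L}^{T}\tilde{\boldsymbol{T}}(h(u)) + \text{const}(u),
\end{equation}
where $L$ is the invertible matrix from (iv). Invertibility of $L$ then implies $\boldsymbol{T}(u)=A\,\tilde{\boldsymbol{T}}(h(u))+\boldsymbol{b}$ for an invertible matrix $A$ and vector $\boldsymbol{b}$.

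Finally, to upgrade this affine relation on the joint sufficient statistics to the desired permutation/component-wise equivalence on the factorized part $\boldsymbol{T}_{\boldsymbol{d}}$, I would differentiate the relation twice using (iii): differentiating once in each of two distinct coordinates $u^i, u^j$ and exploiting the fact that $\boldsymbol{T}_{\boldsymbol{d}}$ has block-diagonal Jacobian (since each $\boldsymbol{T}_i$ depends only on $u^i$), one shows that $h$ cannot mix coordinates, i.e., $h$ acts as a permutation composed with component-wise maps. This is essentially the argument used in the iVAE proof, and it is the step I expect to be the most delicate: one must carefully handle the auxiliary coordinates introduced by $\boldsymbol{T}_{NN}$ and verify that the universal-approximator block does not interfere with the block structure needed to isolate individual coordinates of $u$. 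Once component-wise separation is obtained, identifiability of $\boldsymbol{\lambda}$ and $\boldsymbol{f}$ up to the same equivalence class follows by substituting back into the original equation, completing the proof.
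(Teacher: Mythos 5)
Your proposal follows essentially the same route as the paper's proof for the first two stages: assume two parameter sets induce the same observed conditional, write the observed density as a convolution of the pushforward of $p_{\boldsymbol{T},\boldsymbol{\lambda}}(u\mid c)$ through $\boldsymbol{f}$ with $p_{\boldsymbol{\varepsilon}}$, take Fourier transforms and divide out $\varphi_{\boldsymbol{\varepsilon}}$ using assumption (i) to equate the noise-free densities, then change variables via the injectivity of $\boldsymbol{f}$, take logarithms of the exponential-family form, evaluate at the $k+1$ points from assumption (iv), subtract the $c^0$ equation, and invert $L$ to obtain the affine relation $\boldsymbol{T}(\boldsymbol{f}^{-1}(s))=A\,\tilde{\boldsymbol{T}}(\tilde{\boldsymbol{f}}^{-1}(s))+\boldsymbol{n}$. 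That is exactly the paper's Steps I and II.

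Where you genuinely diverge is the final stage. The paper's Step III only argues (by contradiction with the strongly-exponential prior) that one can pick $k+1$ points making the difference matrix of sufficient statistics full rank, hence that $A$ is invertible; it then declares the parameters equivalent without ever reducing the invertible affine map to a permutation composed with component-wise transformations, even though that stronger equivalence is what the theorem statement asserts. You instead propose the second-derivative argument from the iVAE follow-up theorems: differentiate the affine relation in two distinct coordinates $u^i,u^j$, exploit the block structure of $\boldsymbol{T}_{\boldsymbol{d}}$ (each $\boldsymbol{T}_i$ depending only on $u^i$) to show the map $h=\tilde{\boldsymbol{f}}^{-1}\circ\boldsymbol{f}$ cannot mix coordinates. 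This is the right move — assumption (iii) on twice-differentiability of $\boldsymbol{T}_{\boldsymbol{d}}$ exists precisely to enable it — and your version is therefore closer to actually proving the stated conclusion than the paper's own write-up, which stops at the affine equivalence class. Your caveat about the $\boldsymbol{T}_{NN}$ block is also well placed: handling the universal-approximator component requires the additional dimension/rank conditions of iVAE's Theorem 3, and neither your sketch nor the paper spells those out, so that remains the one step both arguments leave incomplete.
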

Theorem \ref{thm:model_iden} guarantees the identifiability of Equation \eqref{eq:model}. We present its proof in Appendix \ref{app:proth1}. 

Moreover, Theorem \ref{thm:model_iden} further implies a consistent result on the conditional VAE. If the variational distribution of encoder $q_{\boldsymbol{\phi}}$ is a broad parametric family that includes the true posterior, we have the following results.

\begin{theorem}\label{thm:cons}
Assume the following holds:
\begin{enumerate}[(i)]
\item There exists the $({\boldsymbol{\theta}}, {\boldsymbol{\phi}})$ such that the family of distributions $q_{\boldsymbol{\phi}}\left(u \mid s_t, a_t, s_{t+1}, c\right)$ contains $p_{\boldsymbol{\theta}}\left(u \mid s_t, a_t, s_{t+1}, c\right)$.
\item We maximize $\mathcal{L}(\boldsymbol{\theta}, \boldsymbol{\phi})$ with respect to both $\boldsymbol{\theta}$ and $\boldsymbol{\phi}$.
\end{enumerate}
Then, given infinite data, OILCA can learn the true parameters $\boldsymbol{\theta}^*:=\left(\boldsymbol{f}^*, \boldsymbol{T}^*, \boldsymbol{\lambda}^*\right)$.
\end{theorem}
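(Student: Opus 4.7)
The plan is to reduce consistency to the identifiability statement of Theorem~\ref{thm:model_iden} via a standard two-step ELBO decomposition, in the spirit of the iVAE consistency argument. First, I would rewrite the ELBO in Equation~\eqref{eq:elbo} in the equivalent form
\begin{equation*}
\mathcal{L}(\boldsymbol{\theta},\boldsymbol{\phi}) = \log p_{\boldsymbol{\theta}}(s_{t+1}\mid s_t,a_t,c) + \log p(c) - D_{\mathrm{KL}}\bigl(q_{\boldsymbol{\phi}}(u\mid s_t,a_t,s_{t+1},c)\,\|\,p_{\boldsymbol{\theta}}(u\mid s_t,a_t,s_{t+1},c)\bigr),
\end{equation*}
with $\log p(c)$ treated as a constant with respect to the optimization variables. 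Hypothesis~(i) says that for every $\boldsymbol{\theta}$ the variational family is rich enough to attain the true posterior, so profiling out $\boldsymbol{\phi}$ gives $\max_{\boldsymbol{\phi}}\mathcal{L}(\boldsymbol{\theta},\boldsymbol{\phi}) = \log p_{\boldsymbol{\theta}}(s_{t+1}\mid s_t,a_t,c) + \mathrm{const}$, i.e.\ the profiled objective is exactly the conditional log-likelihood of the observations.

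Next, I would take the expectation of this profiled objective under the true generative distribution indexed by $\boldsymbol{\theta}^{\ast}$. In the infinite-data limit hypothesis~(ii) combined with a law-of-large-numbers argument replaces the empirical ELBO with its population version, and maximizing over $\boldsymbol{\theta}$ is then equivalent to minimizing $\mathbb{E}_{(s_t,a_t,c)}\bigl[D_{\mathrm{KL}}\bigl(p_{\boldsymbol{\theta}^{\ast}}(s_{t+1}\mid s_t,a_t,c)\,\|\,p_{\boldsymbol{\theta}}(s_{t+1}\mid s_t,a_t,c)\bigr)\bigr]$. Gibbs' inequality forces the conditional densities at any maximizer $\hat{\boldsymbol{\theta}}$ to coincide with those at $\boldsymbol{\theta}^{\ast}$ almost everywhere on the support.

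Given equality of the generative conditionals, I would invoke Theorem~\ref{thm:model_iden}: its four hypotheses (characteristic-function nonvanishing, injective $\boldsymbol{f}$, twice-differentiable sufficient statistics, and the invertibility of the $k\times k$ matrix $L$ built from the $\boldsymbol{\lambda}(c^{j})$) were stated precisely so that matching generative distributions forces the parameter triples $(\boldsymbol{f},\boldsymbol{T},\boldsymbol{\lambda})$ to agree up to permutation and component-wise transformation of the latent coordinates. Hence $\hat{\boldsymbol{\theta}}$ lies in the equivalence class of $\boldsymbol{\theta}^{\ast}$, which is the meaning of ``learning the true parameters'' in the statement.

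The main obstacle, in my view, is not any single step but the careful bookkeeping needed to license each simplification: one has to verify that (a) the ELBO admits the two equivalent decompositions in the \emph{conditional} model rather than the unconditional VAE; (b) profiling over $\boldsymbol{\phi}$ under hypothesis~(i) is valid uniformly in $\boldsymbol{\theta}$, so that the sup and the expectation may be interchanged; and (c) the infinite-data assumption permits replacing the sample ELBO by its population counterpart without additional regularity on $\boldsymbol{f}$ and $\boldsymbol{T}$. Once these ingredients are in place, the depth of the conclusion is inherited from Theorem~\ref{thm:model_iden}, and consistency itself follows essentially mechanically.
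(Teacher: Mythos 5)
Your proposal follows essentially the same route as the paper's proof: tighten the ELBO to the conditional log-likelihood using hypothesis~(i), appeal to consistency of maximum likelihood in the infinite-data limit, and then transfer the conclusion through the identifiability equivalence class of Theorem~\ref{thm:model_iden}. The paper's own argument is in fact terser than yours---it simply asserts that the conditional VAE ``inherits all the properties of maximum likelihood estimation''---so your explicit population-KL/Gibbs'-inequality step is a faithful (and slightly more careful) elaboration of the same idea rather than a different approach.
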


We present the corresponding proof in  Appendix \ref{app:proth2}. Theorem \ref{thm:cons} is proved by assuming our conditional VAE is flexible enough to ensure the ELBO is tight for some parameters and the optimization algorithm can achieve the global maximum of ELBO.

In our framework, the current generated expert sample pairs $(\tilde{s}_{t+1}, \tilde{a}_{t+1})$ are estimated based on the sampler policy $\hat{\pi}_E$. However, $\hat{\pi}_E$ may be not perfect, and its predicted results may contain noise. Thus, we would like to answer: ``given the noise level of the sampler policy, how many samples one need to achieve sufficiently well performance?''. Using $\kappa \in(0,0.5)$ indicates the noise level of $\hat{\pi}_E$. If $\hat{\pi}_E$ can exactly recover the true action $a_{t+1}$ (i.e., $\kappa=0$ ), then the generated sequences are perfect without any noise. On the contrary, $\kappa=0.5$ means that $\hat{\pi}_E$ can only produce random results, and the generated sequences are fully noisy. Then we have the following theorem:

\begin{theorem}\label{thm:influence}
Given a hypothesis class $\mathcal{H}$, for any $\epsilon, \delta \in(0,1)$ and $\kappa \in(0,0.5)$, if $\hat{\pi}_E \in \mathcal{H}$ is the pretrained policy model learned based on the empirical risk minimization (ERM), and the sample complexity (i.e., number of samples) is larger than $\frac{2 \log \left(\frac{2 \mathcal{H}}{\delta}\right)}{\epsilon^2(1-2 \kappa)^2}$, then the error between the model estimated and true results is smaller than $\epsilon$ with probability larger than $1-\delta$.
\end{theorem}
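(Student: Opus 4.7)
The plan is to treat this as a PAC bound for empirical risk minimization under random classification noise (the classical Angluin--Laird setup), with $\hat{\pi}_E$ playing the role of a noisy labeler whose error rate is bounded by $\kappa<1/2$. Write $R(h) = \Pr[h(s)\neq a]$ for the true (clean) risk of a hypothesis $h\in\mathcal{H}$, and $R_\kappa(h)$ for the risk measured against the noisy targets produced by $\hat{\pi}_E$. The first step is to observe that, under the $\kappa$-noise assumption, the noisy and true risks satisfy the affine relation
\begin{equation*}
R_\kappa(h) \;=\; (1-\kappa)\,R(h) \;+\; \kappa\,\bigl(1-R(h)\bigr) \;=\; \kappa \;+\; (1-2\kappa)\,R(h),
\end{equation*}
so that minimizing $R_\kappa$ is equivalent to minimizing $R$ because $1-2\kappa>0$, and differences in true risk translate to noisy-risk differences via the factor $(1-2\kappa)$.

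Next I would prove uniform convergence of the empirical noisy risk $\widehat{R}_\kappa$ to $R_\kappa$ over $\mathcal{H}$. For a fixed $h$, the indicator losses are bounded in $[0,1]$ and i.i.d., so Hoeffding's inequality gives $\Pr[|\widehat{R}_\kappa(h)-R_\kappa(h)|>\epsilon']\le 2\exp(-2m\epsilon'^2)$ for a sample of size $m$. A union bound over $\mathcal{H}$ yields that with probability at least $1-\delta$,
\begin{equation*}
\sup_{h\in\mathcal{H}} |\widehat{R}_\kappa(h)-R_\kappa(h)| \;\le\; \epsilon', \qquad \text{whenever } m \;\ge\; \tfrac{\log(2|\mathcal{H}|/\delta)}{2\,\epsilon'^2}.
\end{equation*}

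Now comes the standard ERM chaining argument. Let $\widehat{h}$ minimize $\widehat{R}_\kappa$ and $h^*$ minimize $R$ (equivalently $R_\kappa$). Two applications of uniform convergence give $R_\kappa(\widehat{h}) - R_\kappa(h^*)\le 2\epsilon'$. Inverting the affine relation above, this implies $R(\widehat{h}) - R(h^*) \le 2\epsilon'/(1-2\kappa)$. To force the true excess risk to be at most $\epsilon$, I pick $\epsilon' = \epsilon(1-2\kappa)/2$. Substituting this choice into the Hoeffding sample-size requirement produces
\begin{equation*}
m \;\ge\; \frac{\log(2|\mathcal{H}|/\delta)}{2\cdot\epsilon^2(1-2\kappa)^2/4} \;=\; \frac{2\log(2|\mathcal{H}|/\delta)}{\epsilon^2(1-2\kappa)^2},
\end{equation*}
which matches the stated bound (reading $\mathcal{H}$ in the statement as $|\mathcal{H}|$).

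\textbf{Main obstacle.} The routine parts are Hoeffding plus a union bound; the one conceptual step that requires care is the noise-to-clean conversion. Strictly, the identity $R_\kappa = \kappa + (1-2\kappa)R$ is cleanest for binary outputs; for multi-action or continuous-action settings one either needs to assume a symmetric noise model so that the same $(1-2\kappa)$ Lipschitz transfer still holds, or to reinterpret $R$ as a $0/1$ action-mismatch loss. I would state this assumption explicitly and note that under it the remainder of the argument is a direct application of classical ERM analysis under random classification noise.
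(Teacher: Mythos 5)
Your proposal is correct and follows essentially the same route as the paper's proof: both rest on the affine noisy-risk identity $R_\kappa(h)=\kappa+(1-2\kappa)R(h)$, a Hoeffding-plus-union-bound uniform convergence step over $|\mathcal{H}|$ with deviation tolerance $\epsilon(1-2\kappa)/2$, and a comparison of the ERM output against the zero-error hypothesis $h^*$, yielding the identical sample-complexity expression. The only difference is presentational — you run the standard ERM chaining argument directly while the paper phrases the same comparison as a proof by contradiction — and your explicit caveat about the symmetric-noise/0-1 mismatch loss is a point the paper glosses over rather than a gap in your argument.
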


The related proof details are presented in Appendix \ref{app:proth3}. From Theorem \ref{thm:influence}, we can see: in order to guarantee the same performance with a given probability (i.e., $\epsilon$ and $\delta$ are fixed), one needs to generate more than $\frac{2 \log \left(\frac{2 \mathcal{H}}{\delta}\right)}{\epsilon^2(1-2 \kappa)^2}$ sequences. If the noise level $\kappa$ is larger, more samples have to be generated. Extremely, if the pre-trained policy can only produce fully noisy information (i.e., $\kappa=0.5$), then infinity number of samples are required, which is impossible in reality.

For the generalization ability, \cite{kaushik2020explaining} explains the efficacy of counterfactual augmented data by the empirical evidence. In the context of offline IL, a straightforward yet very relevant conclusion from the analysis of generalization ability is the strong dependence on the number of expert data \cite{ross2011reduction}. We work with finite state and action spaces $(|\mathcal{S}|,|\mathcal{A}|<\infty)$, and for the learned policy $\pi_{\boldsymbol{\omega}}$, we can analyze the generalization ability from the perspective of error upper bound with the optimal expert policy $\pi^*$.

\begin{theorem}\label{thm:generalization}
Let $|\mathcal{D}_E|$ be the number of empirical expert data used to train the policy and $\epsilon$ be the expected upper bound of generalization error. There exists constant $h$ such that, if
\begin{equation}
|\mathcal{D}_E|\geq \frac{h|\mathcal{S}|  \left|\mathcal{A}\right| \log (|\mathcal{S}| / \delta)}{{\epsilon}^2},
\end{equation}
and each state $s_t$ is sampled uniformly, then, with probability at least $1-\delta$, we have:
\begin{equation}
\max _{s_{t}}\left\|\pi^* (\cdot\mid s_{t})-\pi_{\boldsymbol{\omega}} (\cdot\mid s_{t}) \right\|_1 \leq {\epsilon}.
\end{equation}
which shows that increasing $|\mathcal{D}_E|$ drastically improves the generalization guarantee.
\end{theorem}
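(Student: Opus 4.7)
}
The plan is to reduce the claim to a per-state concentration statement for the empirical estimate of a categorical distribution over $\mathcal{A}$, and then to take a union bound over $\mathcal{S}$. Under the uniform sampling assumption, the expert data $\mathcal{D}_E$ can be decomposed as follows: for each state $s\in\mathcal{S}$ let $n_s$ be the number of samples landing at $s$, and conditional on landing at $s$ the action is an i.i.d.\ draw from $\pi^*(\cdot\mid s)$. The learned policy $\pi_{\boldsymbol{\omega}}(\cdot\mid s)$, being the output of (discriminator-weighted) behavior cloning on a finite action space, is well-approximated by the empirical action distribution $\hat{\pi}_{n_s}(\cdot\mid s)$ at $s$; it therefore suffices to control $\|\hat{\pi}_{n_s}(\cdot\mid s)-\pi^*(\cdot\mid s)\|_1$ uniformly in $s$.

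First I would pin down $n_s$. Since the $|\mathcal{D}_E|$ states are drawn uniformly from $\mathcal{S}$, a standard Chernoff/Bernstein bound gives $n_s \geq \tfrac{|\mathcal{D}_E|}{2|\mathcal{S}|}$ for every $s$ simultaneously with probability at least $1-\delta/2$, provided $|\mathcal{D}_E|\gtrsim |\mathcal{S}|\log(|\mathcal{S}|/\delta)$, which is implied by the hypothesis. Second, for each fixed $s$ I would invoke the Bretagnolle--Huber--Carol inequality (the standard $L_1$ concentration bound for a multinomial over $|\mathcal{A}|$ categories):
\begin{equation}
\Pr\bigl[\|\hat{\pi}_{n_s}(\cdot\mid s)-\pi^*(\cdot\mid s)\|_1 \geq \epsilon\bigr] \leq 2^{|\mathcal{A}|}\exp\!\bigl(-n_s\epsilon^2/2\bigr).
\end{equation}
Choosing the failure level $\delta/(2|\mathcal{S}|)$ per state and taking a union bound over the $|\mathcal{S}|$ states yields the requirement
\begin{equation}
n_s \;\geq\; \frac{2}{\epsilon^2}\Bigl(|\mathcal{A}|\log 2 + \log(2|\mathcal{S}|/\delta)\Bigr),
\end{equation}
which, combined with the lower bound $n_s \geq |\mathcal{D}_E|/(2|\mathcal{S}|)$ from the first step, is guaranteed as soon as $|\mathcal{D}_E| \geq h\,|\mathcal{S}|\,|\mathcal{A}|\log(|\mathcal{S}|/\delta)/\epsilon^2$ for a suitable absolute constant $h$. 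Combining the two high-probability events (each of failure probability $\leq \delta/2$) gives the desired uniform bound $\max_{s_t}\|\pi^*(\cdot\mid s_t)-\pi_{\boldsymbol{\omega}}(\cdot\mid s_t)\|_1\leq \epsilon$ with probability at least $1-\delta$.

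The main obstacle I anticipate is not the concentration inequality itself, which is standard, but cleanly justifying the step that identifies $\pi_{\boldsymbol{\omega}}(\cdot\mid s)$ with the empirical action frequencies $\hat{\pi}_{n_s}(\cdot\mid s)$: the DWBC objective in Equation \eqref{eq:policy} reweights samples through the discriminator, so one must argue that, in the finite $(\mathcal{S},\mathcal{A})$ setting with a tabular hypothesis class, the minimizer of the expert-side cross-entropy term coincides with the empirical expert conditional, and that the additional unlabeled-data terms do not alter this at the population limit restricted to expert-supported $(s,a)$. Once this reduction is accepted, the remainder is a textbook combination of Chernoff concentration for $n_s$, Bretagnolle--Huber--Carol for the categorical $L_1$ error, and a union bound over $|\mathcal{S}|$, absorbing the $|\mathcal{A}|\log 2$ term into $\log(|\mathcal{S}|/\delta)$ via the constant $h$.
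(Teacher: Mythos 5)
Your proposal follows essentially the same route as the paper's proof: a per-state $L_1$ concentration bound for the empirical action distribution over $|\mathcal{A}|$ categories (the paper invokes Proposition A.8 of Agarwal et al.\ where you invoke Bretagnolle--Huber--Carol, but both yield the same $|\mathcal{A}|\log(\cdot)/\epsilon^2$ per-state sample requirement), followed by a union bound over the $|\mathcal{S}|$ states and the uniform-sampling reduction $n_s \approx |\mathcal{D}_E|/|\mathcal{S}|$. If anything you are more explicit than the paper, which silently identifies $\pi_{\boldsymbol{\omega}}(\cdot\mid s)$ with the empirical expert conditional and compresses the per-state sample-count step into ``rearranging $n$ and relabeling $h$''; the caveat you raise about the DWBC reweighting is a gap in the paper's own argument rather than in yours.
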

Note that we provide the proof details for Theorem \ref{thm:generalization} in Appendix ~\ref{app:proth4}.

\section{Experiments}
\label{sec:experiments}
In this section, we evaluate the performance of OILCA, aiming to answer the following questions. \textbf{Q1:} With the synthetic toy environment of the causal MDP, can OILCA disentangle the latent representations of the exogenous variable?  \textbf{Q2:} For an in-distribution test environment, can OILCA improve the performance? \textbf{Q3:} For an out-of-distribution test environment, can OILCA improve the generalization? In addition, we use five baseline methods: BC-exp (BC on the expert data $\mathcal{D}_E$), BC-all (BC on all demonstrations $\mathcal{D}_{\text{all}}$), ORIL \cite{zolna2020offline}, BCND \cite{sasaki2021behavioral}, LobsDICE \cite{kim2022lobsdice}, DWBC \cite{xu2022discriminator}. More details about these methods are presented in Appendix \ref{app:train_detail}. Furthermore, the hyper-parameters of our method and baselines are all detail-tuned for better performance.

\subsection{Simulations on Toy Environment (Q1)}
In real-world situations, we can hardly get the actual distributions of the exogenous variable.
To evaluate the disentanglement identifiability in variational counterfactual reasoning, we build a toy environment to show that our method can indeed get the disentangled distributions of the exogenous variable. 

\paragraph{Toy environment}For the toy environment, we consider a simple 2D navigation task. The agent can move a fixed distance in each of the four directions. States are continuous and are considered as the 2D position of the agent. The goal is to navigate to a specific target state within a bounding box. The reward is the negative distance between the agent's state and the target state. For the initialization of the environment, we consider $C$ kinds of Gaussian distributions (Equation \eqref{eq:tlambda}) for the exogenous variable, where $C=3$, and the conditional variable $c$ is the class label. We design the transition function by using a multi-layer perceptron (MLP) with invertible activations. We present the details about data collection and model learning in Appendix \ref{app:train_detail}. 
\begin{figure*}[!t]
\begin{minipage}[!t]{0.43\textwidth} 
\begin{figure}[H]
    \centering
    \subfigure[]{\includegraphics[width=0.3\linewidth]{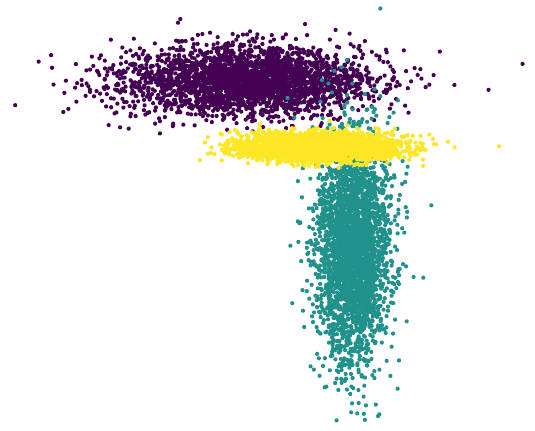}}~
    \subfigure[]{\includegraphics[width=0.3\linewidth]{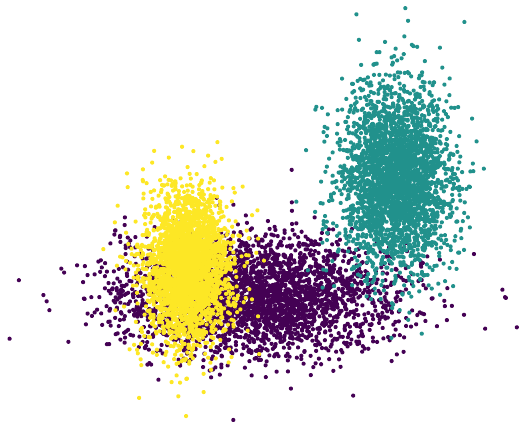}}~
    \subfigure[]{\includegraphics[width=0.3\linewidth]{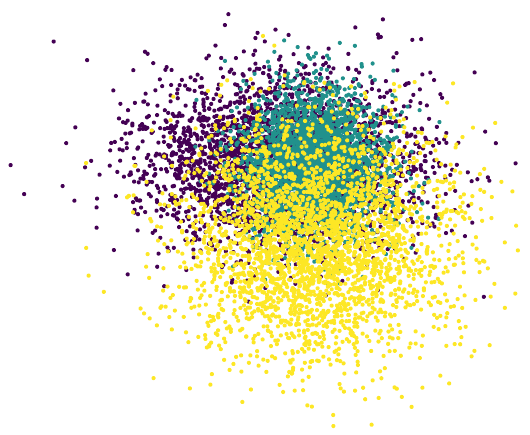}}
    \caption{Visualization of both observation and latent spaces of the exogenous variable. (a) Samples from the true distribution of the sources $p_{\boldsymbol{\theta}^*}(u | c)$. (b) Samples from the
posterior $q_{\boldsymbol{\phi}}\left(u | s_t, a_t, s_{t+1}, c\right)$. (c) Samples from the
posterior $q_{\boldsymbol{\phi'}}\left(u| s_t, a_t, s_{t+1}\right)$ without class label.}
    \label{fig:exo}  
\end{figure}
 \end{minipage}\quad
\begin{minipage}[!t]{0.25\textwidth} 
\begin{figure}[H]
    \centering
    \includegraphics[width=1\linewidth]{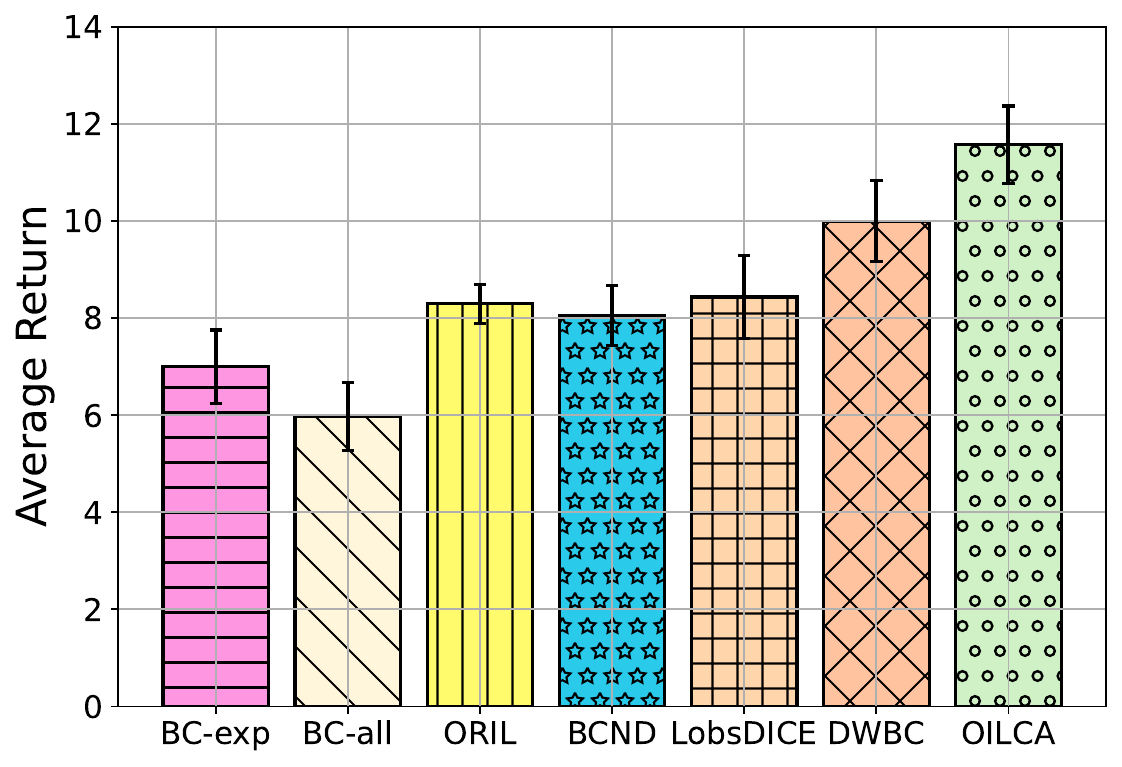}
    \caption{Performance of OILCA and baselines in the toy environment. We plot the mean and the standard errors of the averaged return over five random seeds.}
    \label{fig:simu_result}
    \end{figure}
\end{minipage} \quad
\begin{minipage}[!h]{0.25\textwidth} 
\begin{figure}[H]
    \centering
    \includegraphics[width=1\linewidth]{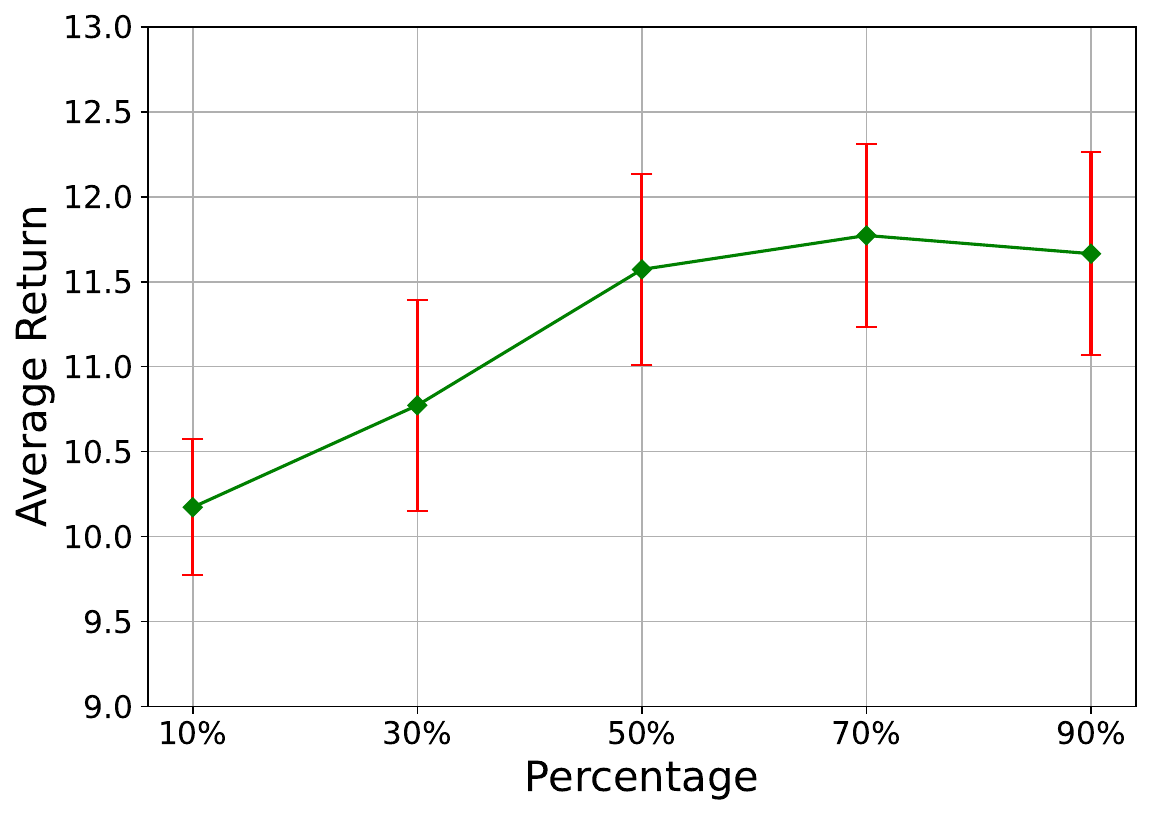}
    \caption{Performance of OILCA with the growing percentage of $|\mathcal{D}_{E}| /|\mathcal{D}_{U}|$. We plot the average return's mean and standard errors over five random seeds.}
    \label{fig:ablation_sim}
    \end{figure}
\end{minipage}
\end{figure*}

\paragraph{Results} We visualize the identifiability in such a 2D case in Figure \ref{fig:exo}, where we plot the sources, the posterior distributions learned by OILCA, and the posterior distributions learned by a vanilla conditional VAE, respectively. Our method recovers the original sources to trivial indeterminacies (rotation and sign flip), whereas the vanilla conditional VAE fails to separate the latent variables well. To show the effectiveness of our method in the constructed toy environment, we also conduct repeated experiments for 1K episodes (500 steps per episode) to compare OILCA with all baseline methods. The results are presented in Figure \ref{fig:simu_result}, showing that OILCA achieves the highest average return. 
To analyze the influence of the number of augmented samples (Theorem \ref{thm:generalization}), we also conduct the experiments with varying numbers of counterfactual expert data; the result is shown in Figure \ref{fig:ablation_sim}. The X-axis represents the percentage of $|\mathcal{D}_{E}| /|\mathcal{D}_{U}|$, where $\mathcal{D}_{E}$ is the augmented expert data. We can observe that within a certain interval, the generalization ability of the learned policy does improve with the increase of $|\mathcal{D}_{E}|$.

\subsection{In-distribution Performance (Q2)}

We use \textsc{DeepMind Control Suite} \cite{tassa2018deepmind} to evaluate the performance of in-distribution performance. Similar to \cite{zolna2020offline}, we also define an episode as positive if its episodic reward is among the top 20\% episodes for the task. For the auxiliary variable $c$, we add three kinds of different Gaussian noise distributions into the environment (encoded as $c=\{0,1,2\}$). Moreover, we present the detailed data collection and statistics in Appendix \ref{app:data_generate}).  
We report the results in Table \ref{tab:dmc}. We can observe that OILCA outperforms baselines on all tasks; this shows the effectiveness of the augmented counterfactual expert data. We also find that the performance of ORIL and DWBC tends to decrease in testing for some tasks (ORIL: \textsc{Cheetah Run}, \textsc{Walker Walk}; DWBC: \textsc{Cheetah Run}, \textsc{Manipulator Insert Ball}); this ``overfitting'' phenomenon also occurs in experiments of previous offline RL works \cite{kumar2019stabilizing,wu2019behavior}. This is perhaps due to limited data size and model generalization bottleneck. DWBC gets better results on most tasks, but for some tasks, such as Manipulator Insert Ball and the Walker Walk, OILCA achieves more than twice the average return than DWBC.

\begin{table}[!t]
\renewcommand\arraystretch{1.4}
\caption{Results for in-distribution performance on \textsc{DeepMind Control Suite}. We report the average return of episodes (with the length of 1K steps) over five random seeds. The best results and second best results are \textbf{bold} and \underline{underlined}, respectively.}\label{tab:dmc}
\resizebox{\linewidth}{!}{
\begin{tabular}{l|ccccccc}
\toprule
 \texttt{Task Name}               & BC-exp & BC-all &ORIL  &BCND  & LobsDICE & DWBC & \textbf{OILCA} \\ \midrule
\textsc{Cartpole Swingup}        &   \makecell[c]{195.44\\ \scriptsize{$\pm$ 7.39}}     &   \makecell[c]{269.03\\ \scriptsize{$\pm$ 7.06}}     &   \makecell[c]{221.24\\ \scriptsize{$\pm$ 14.49}}   &   \makecell[c]{243.52\\ \scriptsize{$\pm$ 11.33}}   &  \makecell[c]{292.96\\ \scriptsize{$\pm$ 11.05}}        &  \makecell[c]{\underline{382.55}\\ \scriptsize{$\pm$ 8.95}}    &   \makecell[c]{\textbf{608.38}\\ \scriptsize{$\pm$ 35.54}}   \\ 
                                                                                      \textsc{Cheetah Run}             &   \makecell[c]{66.59\\ \scriptsize{$\pm$ 11.09}}     &   \makecell[c]{90.01\\ \scriptsize{$\pm$ 31.74}}     &  \makecell[c]{45.08 \\\scriptsize{$\pm$ 9.88}}    &  \makecell[c]{\underline{96.06}\\\scriptsize{$\pm$ 16.15}}    &   \makecell[c]{74.53\\\scriptsize{$\pm$ 7.75}}        &  \makecell[c]{66.87\\\scriptsize{$\pm$  4.60}}   &   \makecell[c]{\textbf{116.05}\\\scriptsize{$\pm$ 14.65}}   \\ 
                                                                                      \textsc{Finger Turn Hard}        &  \makecell[c]{129.20\\ \scriptsize{$\pm$ 4.51}}      &  \makecell[c]{104.56\\ \scriptsize{$\pm$ 8.32}}      &  \makecell[c]{185.57\\ \scriptsize{$\pm$ 26.75}}    &  \makecell[c]{204.67\\ \scriptsize{$\pm$ 13.18}}    &  \makecell[c]{190.93\\ \scriptsize{$\pm$ 12.19}}         &  \makecell[c]{\underline{243.47}\\ \scriptsize{$\pm$ 17.12}}     &  \makecell[c]{\textbf{298.73}\\ \scriptsize{$\pm$ 5.11}}    \\ 
                                                                                      \textsc{Fish Swim}               &   \makecell[c]{74.59\\\scriptsize{$\pm$11.73}}     &   \makecell[c]{68.87\\\scriptsize{$\pm$ 11.93}}     & \makecell[c]{84.90\\\scriptsize{$\pm$1.96}}     &  \makecell[c]{153.28\\\scriptsize{$\pm$19.29}}    &    \makecell[c]{188.84\\\scriptsize{$\pm$11.28}}      &   \makecell[c]{\underline{212.39}\\\scriptsize{$\pm$7.62}}    & \makecell[c]{\textbf{290.28}\\\scriptsize{$\pm$ 10.07}}     \\ 
                                                                                      \textsc{Humanoid Run}            &    \makecell[c]{77.59\\\scriptsize{$\pm$8.63}}    &   \makecell[c]{138.93\\\scriptsize{$\pm$9.14}}     &   \makecell[c]{96.88\\\scriptsize{$\pm$10.76}}  &   \makecell[c]{257.01\\\scriptsize{$\pm$11.21}}    &   \makecell[c]{120.87\\\scriptsize{$\pm$10.66}}       &    \makecell[c]{\underline{302.33}\\\scriptsize{$\pm$14.53}}  &   \makecell[c]{\textbf{460.96}\\\scriptsize{$\pm$17.55}}   \\ 
                                                                                      \textsc{Manipulator Insert Ball} &   \makecell[c]{91.61\\\scriptsize{$\pm$7.49}}     &    \makecell[c]{98.59\\\scriptsize{$\pm$1.62}}    &  \makecell[c]{98.25\\\scriptsize{$\pm$2.68}}    &  \makecell[c]{141.71\\\scriptsize{$\pm$15.30}}    &   \makecell[c]{\underline{197.79}\\\scriptsize{$\pm$1.98}}       &   \makecell[c]{107.86\\\scriptsize{$\pm$15.01}}   &    \makecell[c]{\textbf{296.83}\\\scriptsize{$\pm$3.43}}  \\ 
                                                                                      \textsc{Manipulator Insert Peg}  &   \makecell[c]{92.02\\\scriptsize{$\pm$15.04}}     &  \makecell[c]{119.63\\\scriptsize{$\pm$6.37}}       &  \makecell[c]{105.86\\\scriptsize{$\pm$5.10}}    &   \makecell[c]{220.66\\\scriptsize{$\pm$15.14}}   &   \makecell[c]{\underline{299.19}\\\scriptsize{$\pm$3.40}}       &  \makecell[c]{238.39\\\scriptsize{$\pm$19.76}}    &   \makecell[c]{\textbf{305.66}\\\scriptsize{$\pm$4.91}}   \\ 
                                                                                      \textsc{Walker Stand}            &   \makecell[c]{169.14\\\scriptsize{$\pm$ 8.00}}     & \makecell[c]{192.14\\\scriptsize{$\pm$ 37.91}}      &  \makecell[c]{181.23\\\scriptsize{$\pm$ 10.31}}    &  \makecell[c]{279.66\\\scriptsize{$\pm$ 12.69}}    &   \makecell[c]{252.34\\\scriptsize{$\pm$ 7.73} }      &  \makecell[c]{\underline{280.07}\\\scriptsize{$\pm$ 5.79}}    &  \makecell[c]{\textbf{395.51}\\\scriptsize{$\pm$ 8.05}}   \\ 
                                                                                      \textsc{Walker Walk}             &  \makecell[c]{29.44\\\scriptsize{$\pm$ 2.18}}      &  \makecell[c]{75.79\\\scriptsize{$\pm$ 6.34}}      &  \makecell[c]{41.43\\\scriptsize{$\pm$ 4.05}}    &   \makecell[c]{157.44\\\scriptsize{$\pm$ 9.31}}   &  \makecell[c]{102.14\\\scriptsize{$\pm$ 5.94}}        &   \makecell[c]{\underline{166.95}\\\scriptsize{$\pm$ 10.68}}   &     \makecell[c]{\textbf{377.19}\\\scriptsize{$\pm$ 15.87}}  \\ \bottomrule
\end{tabular}}
\end{table}

\subsection{Out-of-distribution Generalization (Q3)} 
To evaluate the out-of-distribution generalization of OILCA, we use a benchmark named \textsc{CausalWorld} \cite{ahmed2020causalworld}, which provides a combinatorial family of tasks with a common causal structure and underlying factors. Furthermore, the auxiliary variable $c$ represents the different \textit{do}-interventions on the task environments \cite{ahmed2020causalworld}. We collect the offline data by using three different \textit{do}-interventions on environment features (\texttt{stage\_color}, \texttt{stage\_friction}. \texttt{floor\_friction}) to generate offline datasets, while other features are set as the defaults. More data collection and statistics details are presented in Appendix \ref{app:data_generate}. 
We show the comparative results in Table \ref{tab:cw}. It is evident that OILCA also achieves the best performance on all the tasks; ORIL and DWBC perform the second-best results on most tasks. BCND performs poorly compared to other methods. The reason may be that the collected data in space $\mathbf{A}$ can be regarded as low-quality data when evaluating on space $\mathbf{B}$, which may lead to the poor performance of a single BC and cause cumulative errors for the BC ensembles of BCND. LobsDICE even performs poorer than BC-exp on most tasks. This is because the KL regularization, which regularizes the learned policy to stay close to $\mathcal{D}_{U}$ is too conservative, resulting in a suboptimal policy, especially when $\mathcal{D}_{U}$ contains a large collection of noisy data. This indeed hurts the performance of LobsDICE for out-of-distribution generalization.

\begin{table}[!t]
\renewcommand\arraystretch{1.4}
\caption{Results for out-of-distribution generalization on \textsc{CausalWorld}. We report the average return of episodes (length varies for different tasks) over five random seeds. All the models are trained on \textit{space} $\mathbf{A}$ and tested on \textit{space} $\mathbf{B}$ to show the out-of-distribution performance \cite{ahmed2020causalworld}. }\label{tab:cw}
\resizebox{\linewidth}{!}{
\begin{tabular}{l|ccccccc}
\toprule
 \texttt{Task Name}               & BC-exp & BC-all &ORIL  &BCND  & LobsDICE & DWBC & \textbf{OILCA} \\ \midrule
                                                                                      \textsc{Reaching}                 &    \makecell[c]{281.18\\\scriptsize{$\pm$ 16.45}}    &    \makecell[c]{176.54\\\scriptsize{$\pm$ 9.75}}    &  \makecell[c]{339.40\\\scriptsize{$\pm$ 12.98}}    &   \makecell[c]{228.33\\\scriptsize{$\pm$ 7.14}}   &   \makecell[c]{243.29\\\scriptsize{$\pm$ 9.84}}       &  \makecell[c]{\underline{479.92}\\\scriptsize{$\pm$ 18.75}}    &   \makecell[c]{\textbf{976.60}\\\scriptsize{$\pm$ 20.13}}   \\ 
                                                                                      \textsc{Pushing}                 &   \makecell[c]{256.64\\\scriptsize{$\pm$ 12.70}}     &  \makecell[c]{235.58\\\scriptsize{$\pm$ 10.23}}      &  \makecell[c]{283.91\\\scriptsize{$\pm$ 19.72}}    &  \makecell[c]{191.23\\\scriptsize{$\pm$ 12.64}}    &    \makecell[c]{206.44\\\scriptsize{$\pm$ 15.35}}      &  \makecell[c]{\underline{298.09}\\\scriptsize{$\pm$ 14.94}}    &  \makecell[c]{\textbf{405.08}\\\scriptsize{$\pm$ 24.03}}    \\ 
                                                                                      \textsc{Picking}          &  \makecell[c]{270.01\\\scriptsize{$\pm$ 13.13}}      &   \makecell[c]{258.54\\\scriptsize{$\pm$ 16.53}}     &   \makecell[c]{\underline{388.15}\\\scriptsize{$\pm$ 19.21}}   &  \makecell[c]{221.89\\\scriptsize{$\pm$ 7.68}}    &     \makecell[c]{337.78\\\scriptsize{$\pm$ 12.09}}     &     \makecell[c]{366.26\\\scriptsize{$\pm$ 8.77}} &    \makecell[c]{\textbf{491.09}\\\scriptsize{$\pm$ 6.44}}  \\ 
                                                                                      \textsc{Pick and Place}               & \makecell[c]{294.06\\\scriptsize{$\pm$ 7.34}}       &  \makecell[c]{225.42\\\scriptsize{$\pm$ 12.44}}      &  \makecell[c]{270.75\\\scriptsize{$\pm$ 14.87}}    &  \makecell[c]{259.12\\\scriptsize{$\pm$ 8.01}}    &  \makecell[c]{266.09\\\scriptsize{$\pm$ 10.31}}        & \makecell[c]{\underline{349.66}\\\scriptsize{$\pm$ 7.39}}     &   \makecell[c]{\textbf{490.24}\\\scriptsize{$\pm$ 11.69}}   \\ 
                                                                                      \textsc{Stacking2}          &  \makecell[c]{\underline{496.63}\\\scriptsize{$\pm$ 7.68}}       &  \makecell[c]{394.91\\\scriptsize{$\pm$ 16.98}}      &   \makecell[c]{388.55\\\scriptsize{$\pm$ 10.93}}   &  \makecell[c]{339.18\\\scriptsize{$\pm$ 9.46}}    &   \makecell[c]{362.47\\\scriptsize{$\pm$ 17.05}}       &   \makecell[c]{481.07\\\scriptsize{$\pm$ 10.11}}   &  \makecell[c]{\textbf{831.82}\\\scriptsize{$\pm$ 11.78}}     \\ 
                                                                                      \textsc{Towers}                 & \makecell[c]{667.81\\\scriptsize{$\pm$ 9.27}}       & \makecell[c]{\underline{784.88}\\\scriptsize{$\pm$ 17.17}}       & \makecell[c]{655.96\\\scriptsize{$\pm$ 15.14}}     &  \makecell[c]{139.30\\\scriptsize{$\pm$ 18.22}}    & \makecell[c]{535.74\\\scriptsize{$\pm$ 13.76}}         &   \makecell[c]{768.68\\\scriptsize{$\pm$ 24.77}}   & \makecell[c]{\textbf{994.82}\\\scriptsize{$\pm$ 5.76}}     \\  
                                                                                      \textsc{Stacked Blocks} &   \makecell[c]{581.91\\\scriptsize{$\pm$ 26.92}}     &      \makecell[c]{452.88\\\scriptsize{$\pm$ 18.78}}   &     \makecell[c]{702.15\\\scriptsize{$\pm$ 15.30}}  &   \makecell[c]{341.97\\\scriptsize{$\pm$ 33.69}}    &        \makecell[c]{250.44\\\scriptsize{$\pm$ 14.08}}   &      \makecell[c]{\underline{1596.96}\\\scriptsize{$\pm$ 81.84}} &   \makecell[c]{\textbf{2617.71}\\\scriptsize{$\pm$ 88.07}}    \\ 
                                                                                      \textsc{Creative Stacked Blocks} &  \makecell[c]{496.32\\\scriptsize{$\pm$ 26.92}}      &   \makecell[c]{529.82\\\scriptsize{$\pm$ 31.01}}     & \makecell[c]{\underline{882.27}\\\scriptsize{$\pm$ 46.79}}     &  \makecell[c]{288.55\\\scriptsize{$\pm$ 19.63}}    &         \makecell[c]{317.95\\\scriptsize{$\pm$ 32.03}} &   \makecell[c]{700.23\\\scriptsize{$\pm$ 13.71}}   &  \makecell[c]{\textbf{1348.49}\\\scriptsize{$\pm$ 55.05}}    \\ 
                                                                                      \textsc{General}                  &\makecell[c]{492.78\\\scriptsize{$\pm$17.64}}        &   \makecell[c]{547.32\\\scriptsize{$\pm$8.49}}     &  \makecell[c]{\underline{647.95}\\\scriptsize{$\pm$24.39}}    &   \makecell[c]{195.06\\\scriptsize{$\pm$9.80}}   &   \makecell[c]{458.27\\\scriptsize{$\pm$18.69}}       &   \makecell[c]{585.98\\\scriptsize{$\pm$19.25}}   &   \makecell[c]{\textbf{891.14}\\\scriptsize{$\pm$23.12}}   \\ \bottomrule
\end{tabular}}
\end{table}

\section{Conclusion}

In this paper, we propose an effective and generalizable offline imitation learning framework OILCA, which can learn a policy from the expert and unlabeled demonstrations. We apply a novel identifiable counterfactual expert data augmentation approach to facilitate the offline imitation learning. We also analyze the influence of generated data and the generalization ability theoretically. The experimental results demonstrate that our method achieves better performance in simulated and public tasks.

\vspace{-0.2cm}
\section*{Acknowledgement}

This work is supported in part by National Key R\&D Program of China (2022ZD0120103), National Natural Science Foundation of China (No. 62102420), Beijing Outstanding Young Scientist Program NO. BJJWZYJH012019100020098, Intelligent Social Governance Platform, Major Innovation \& Planning Interdisciplinary Platform for the ``Double-First Class'' Initiative, Renmin University of China, Public Computing Cloud, Renmin University of China, fund for building world-class universities (disciplines) of Renmin University of China, Intelligent Social Governance Platform.

\bibliographystyle{plainnat}
\bibliography{refer.bib}
\newpage
\setcounter{section}{0}
\appendix

\section{Proof of Theorem \ref{thm:model_iden}}\label{app:proth1}

In this section, we provide proof for the disentanglement identifiability of the inferred exogenous variable. Our proof consists of three main components. It is worth noting that we also use $\boldsymbol{f}$ to replace $\boldsymbol{f}_{s_t, a_t}$ for simplicity.
\begin{proof}
The following are the three steps:

\textbf{Step I} We use the first assumption in Theorem \ref{thm:model_iden} to demonstrate that the observed data distributions are equivalent to the noiseless distributions. Specifically, suppose that we have two sets of parameters $(\boldsymbol{f}, \boldsymbol{T}, \boldsymbol{\lambda})$ and $(\tilde{\boldsymbol{f}}, \tilde{\boldsymbol{T}}, \tilde{\boldsymbol{\lambda}})$, such that for all pairs $(s_{t+1}, c)$ ($(s,c)$ for simplicity), we have:
\begin{align}
   \tilde{p}_{\boldsymbol{T}, \boldsymbol{\lambda}, \boldsymbol{f}, c}(s)=\tilde{p}_{\tilde{\boldsymbol{T}}, \tilde{\boldsymbol{f}}, \tilde{\boldsymbol{\lambda}}, c}(s) 
\end{align}

\begin{align}
p_{\boldsymbol{\theta}}(s \mid c)&=p_{\tilde{\boldsymbol{\theta}}}(s \mid c)\\
\Longrightarrow \int p_{\boldsymbol{\varepsilon}}(s-\boldsymbol{f}(u)) p_{\boldsymbol{T}, \boldsymbol{\lambda}}(u \mid c) d u&=\int p_{\boldsymbol{\varepsilon}}(s-\tilde{\boldsymbol{f}}(u)) p_{\tilde{\boldsymbol{T}}, \tilde{\boldsymbol{\lambda}}}(u \mid c) d u
\end{align}
\begin{align}
\resizebox{\linewidth}{!}{$
\Longrightarrow \int p_{\boldsymbol{\varepsilon}}(s-\overline{s}) p_{\boldsymbol{T}, \boldsymbol{\lambda}}\left(\boldsymbol{f}^{-1}(\overline{s} \mid c) \operatorname{vol}\left(J_{f^{-1}}(\overline{s})\right) d \overline{s} \right.=\int p_{\boldsymbol{\varepsilon}}(s-\overline{s}) p_{\boldsymbol{T}, \boldsymbol{\lambda}}\left(\tilde{\boldsymbol{f}}^{-1}(\overline{s} \mid c) \operatorname{vol}\left(J_{\tilde{f}^{-1}}(\overline{s})\right) d \overline{s}\right.$}\label{eq:jacco}
\end{align}
\begin{align}
\Longrightarrow \int p_{\boldsymbol{\varepsilon}}(s-\overline{s}) \tilde{p}_{\boldsymbol{T}, \boldsymbol{\lambda}, \boldsymbol{f}, c}(\overline{s}) d \overline{s}&= \int p_{\boldsymbol{\varepsilon}}(s-\overline{s}) \tilde{p}_{\tilde{\boldsymbol{T}}, \tilde{\boldsymbol{f}}, \tilde{\boldsymbol{\lambda}}, c}(\overline{s}) d \overline{s}\label{eq:trans}\\
\Longrightarrow\left(\tilde{p}_{\boldsymbol{T}, \boldsymbol{\lambda}, \boldsymbol{f}, c} * p_{\boldsymbol{\varepsilon}}\right)(s)&=\left(\tilde{p}_{\tilde{\boldsymbol{T}}, \tilde{\boldsymbol{f}}, \tilde{\boldsymbol{\lambda}}, c} * p_{\boldsymbol{\varepsilon}}\right)(s)\label{eq:conv}\\
\Longrightarrow F\left[\tilde{p}_{\boldsymbol{T}, \boldsymbol{\lambda}, \boldsymbol{f}, c}\right](\boldsymbol{\omega}) \varphi_{\boldsymbol{\varepsilon}}(\boldsymbol{\omega})&=F\left[\tilde{p}_{\tilde{\boldsymbol{T}}, \tilde{\boldsymbol{f}}, \tilde{\boldsymbol{\lambda}}}\right](\boldsymbol{\omega}) \varphi_{\boldsymbol{\varepsilon}}(\boldsymbol{\omega})\label{eq:four}\\
\Longrightarrow F\left[\tilde{p}_{\boldsymbol{T}, \boldsymbol{\lambda}, \boldsymbol{f}, c}\right](\boldsymbol{\omega}) &=F\left[\tilde{p}_{\tilde{\boldsymbol{T}}, \tilde{\boldsymbol{f}}, \tilde{\boldsymbol{\lambda}}, c}\right](\boldsymbol{\omega})\label{eq:drop} \\
\Longrightarrow \tilde{p}_{\boldsymbol{T}, \boldsymbol{\lambda}, \boldsymbol{f}, c}(s) &=\tilde{p}_{\tilde{\boldsymbol{T}}, \tilde{\boldsymbol{f}}, \boldsymbol{\boldsymbol { \lambda }}, c}(s).\label{eq:last}
\end{align}

where:
\begin{itemize}
\item in Equation \eqref{eq:jacco}, $J$ denotes the Jacobian, and we make the change of variable $\overline{s}=\boldsymbol{f}(u)$ on the left-hand side, and $\overline{s}=\tilde{\boldsymbol{f}}(u)$ on the right-hand side.
\item in Equation \eqref{eq:trans}, we introduce
\begin{align}
\tilde{p}_{\boldsymbol{T}, \boldsymbol{\lambda}, \boldsymbol{f}, c} \triangleq p_{\boldsymbol{T}, \boldsymbol{\lambda}}\left(\boldsymbol{f}^{-1}\right)(s \mid c) \operatorname{vol}\left(J_{f^{-1}}(s)\right) \mathbb{I}(s)\label{eq:dingyi}
\end{align}

\item  in Equation \eqref{eq:conv}, $*$ denotes the convolution operator.
\item  in Equation \eqref{eq:four}, $F$ denotes the Fourier transformation and $\varphi_{\boldsymbol{\varepsilon}}=F\left[p_{\boldsymbol{\varepsilon}}\right]$.
\item  in Equation \eqref{eq:drop}, $\varphi_{\boldsymbol{\varepsilon}}(\boldsymbol{w})$ is dropped because it is non-zero almost everywhere according to the first assumption of Theorem \ref{thm:model_iden}.
\end{itemize}

Equation \eqref{eq:last} is valid for all $(s, c)$. What is basically says is that for the distributions to be the same after adding the noise, the noise-free distributions have to be the same. Note that $s$ here is a general variable, and we are actually dealing with the noise-free probability densities.

\textbf{Step II} Using Equation \eqref{eq:dingyi} to substitute Equation \eqref{eq:last}, we have
\begin{align}
p_{\boldsymbol{T}, \boldsymbol{\lambda}}\left(\boldsymbol{f}^{-1}\right)(s \mid c) \operatorname{vol}\left(J_{f^{-1}}(s)\right) \mathbb{I}(s)=p_{\tilde{\boldsymbol{T}}}, \tilde{\boldsymbol{\lambda}}\left(\tilde{\boldsymbol{f}}^{-1}\right)(s \mid c) \operatorname{vol}\left(J_{\tilde{f}^{-1}}(s)\right) \mathbb{I}(s) .
\end{align}
Then, we can apply logarithm on the above equation and substitute $p_{\boldsymbol{T}, \boldsymbol{\lambda}}$ with its definition in Equation \eqref{eq:exo_noise}, and obtain
\begin{align}
\log \operatorname{vol}\left(J_{f^{-1}}(s)\right) \log Q\left(\boldsymbol{f}^{-1} s\right) & -\log Z(c)+\left\langle\boldsymbol{T}\left(\boldsymbol{f}^{-1}(s)\right), \boldsymbol{\lambda}(c)\right\rangle\nonumber \\
& =\log \operatorname{vol}\left(J_{\tilde{f}^{-1}}(s)\right) \log \tilde{Q}\left(\tilde{\boldsymbol{f}}^{-1} s\right)-\log \tilde{Z}(c)+\left\langle\tilde{\boldsymbol{T}}\left(\tilde{\boldsymbol{f}}^{-1}(s)\right), \tilde{\boldsymbol{\lambda}}(c)\right\rangle
\end{align}
Let $c^0, \cdots, c^k$ be the $k+1$ points defined in the fourth assumption of Theorem \ref{thm:model_iden}, we can obtain $k+1$ equation. By subtracting the first equation from the remaining $k$ equations, we then obtain:
\begin{align}
\left\langle\boldsymbol{T}\left(\boldsymbol{f}^{-1}(s)\right), \boldsymbol{\lambda}\left(c^l\right)-\boldsymbol{\lambda}\left(c^0\right)\right\rangle& +\log \frac{Z\left(c^0\right)}{Z\left(c^l\right)}\nonumber \\
& =\left\langle\tilde{\boldsymbol{T}}\left(\tilde{\boldsymbol{f}}^{-1}(s)\right), \tilde{\boldsymbol{\lambda}}\left(c^l\right)-\tilde{\boldsymbol{\lambda}}\left(c^0\right)\right\rangle+\log \frac{\tilde{Z}\left(c^0\right)}{\tilde{Z}\left(c^l\right)},
\end{align}
where $l=1, \cdots, k$. Let $\boldsymbol{b} \in \mathbb{R}^k$ in which $b_l=\log \frac{\tilde{Z}\left(c^0\right) Z\left(c^l\right)}{\tilde{Z}\left(c^l\right) Z\left(c^0\right)}$, we have
\begin{align}
L^T \boldsymbol{T}\left(\boldsymbol{f}^{-1}(s)\right)=\tilde{L} \tilde{\boldsymbol{T}}\left(\tilde{\boldsymbol{f}}^{-1}(s)\right)+\boldsymbol{m}
\end{align}
Finally, we multiply both side by $L^{-T}$ and obtain
\begin{align}
\boldsymbol{T}\left(\boldsymbol{f}^{-1}(s)\right)=A \tilde{\boldsymbol{T}}\left(\tilde{\boldsymbol{f}}^{-1}(s)\right)+\boldsymbol{n}. \label{eq:step2}
\end{align}
where $A=L^{-T} L$ and $\boldsymbol{n}=L^{-T} \boldsymbol{m}$.

\textbf{Step III} Now recall the definition of $\boldsymbol{T}$ and the third assumption. We start by evaluating Equation \eqref{eq:step2} at $k+1$ points of $u^l, s^l$ and obtain $k+1$ equations. Then, we subtract the first equation from the remaining $k+1$ equations:
\begin{align}
&{\left[\boldsymbol{T}\left(u_1\right)-\boldsymbol{T}\left(u^0\right), \cdots, \boldsymbol{T}\left(u^k\right)-\boldsymbol{T}\left(u^0\right)\right]} \nonumber\\
=&A\left[\tilde{\boldsymbol{T}}\left(\tilde{\boldsymbol{f}}^{-1}\left(s^1\right)\right)-\tilde{\boldsymbol{T}}\left(\tilde{\boldsymbol{f}}^{-1}\left(s^0\right)\right), \cdots, \tilde{\boldsymbol{T}}\left(\tilde{\boldsymbol{f}}^{-1}\left(s^l\right)\right)-\tilde{\boldsymbol{T}}\left(\tilde{\boldsymbol{f}}^{-1}\left(s^0\right)\right)\right] .
\end{align}
Next, we only need to show that for $u_0$, there exist $k$ points $u^1, \cdots, u^k$ such that the columns are linear independent, which can be proven by contradiction. Suppose that there exists no such $u^l\in\{ u^0,\cdots, u^k\}$, then $\left\langle\boldsymbol{T}(u^l)-\boldsymbol{T}\left(u^0\right), \boldsymbol{\lambda}\right\rangle=0$ and thus $\boldsymbol{T}(u^l)=\boldsymbol{T}\left(u^0\right)= \mathrm{const}$. This contradicts with the assumption that the prior distribution is strongly exponential. Therefore, there must exist $k+1$ points such that the transformation is invertible. Then we have $(\boldsymbol{f}, \boldsymbol{T}, \boldsymbol{\lambda}) \sim(\tilde{\boldsymbol{f}}, \tilde{\boldsymbol{T}}, \tilde{\boldsymbol{\lambda}})$.
\end{proof}

\section{Proof of Theorem \ref{thm:cons}}\label{app:proth2}
According to Equation \eqref{eq:elbo}, 
if the family $q_{\boldsymbol{\phi}}\left(u \mid s_t, a_t, s_{t+1}, c\right)$ is large enough to include $p_{\boldsymbol{\theta}}\left(u \mid s_{t+1}, s_t, a_t, c\right)$, then by optimizing the loss over its parameter $\boldsymbol{\phi}$, we will minimize the KL term, eventually reaching zero, and the loss will be equal to the log-likelihood. 

The conditional VAE, in this case, inherits all the properties of maximum likelihood estimation. In this particular case, since our identifiability is guaranteed up to equivalence classes, the consistency of MLE means that we converge to the equivalence class (Theorem \ref{thm:model_iden}) of true parameter $\boldsymbol{\theta}^*$ \textit{i.e}. Under the condition of infinite data.
\section{Proof of Theorem \ref{thm:influence}}\label{app:proth3}
Suppose the prediction error of $\hat{\pi}_E$ is $e$ (\textit{i.e}., $\sum \mathbb{I}(a_t^{\hat{\pi}_E} \neq a_t )=e), a_t$ is the true action that an expert take, then the mismatching probability between the observed and predicted results comes from two parts: (1) The observed result is true, but the prediction is wrong, that is, $e(1-\kappa)$. (2) The observed result is wrong, but the prediction is right, that is $(1-e) \kappa$. Thus, the total mismatching probability is $\kappa+e(1-2 \kappa)$.

The following proof is based on the reduction to absurdity. We first propose an assumption and then derive contradicts to invalidate the assumption.

\textbf{Assumption.} Suppose the prediction error of $\hat{\pi}_E$ (i.e., $e$ ) is larger than $\epsilon$. Then, at least one of the following statements hold: 
\begin{enumerate}[(1)]
    \item The empirical mis-matching rate of $\hat{\pi}_E$ is smaller than $\kappa+\frac{\epsilon(1-2 \kappa)}{2}$. 
    \item The empirical mis-matching rate of the optimal $h^* \in \mathcal{H}$ (\textit{i.e.}, the prediction error of $h^*$ is 0$)$ is larger than $\kappa+\frac{\epsilon(1-2 \kappa)}{2}$. 
\end{enumerate}

These statements are easy to understand, since if both of them do not hold, we can conclude that the empirical loss of $\hat{\pi}_E$ is larger than that of $h^*$, which does not agree with the ERM definition.

\textbf{Contradicts.} To begin with, we review the uniform convergence properties~\cite{shalev2014understanding} by the following lemma:

\begin{lemma}\label{lemma:pro3}
Let $\mathcal{H}$ be a hypothesis class, then for any $\epsilon \in(0,1)$ and $h \in \mathcal{H}$, if the number of training samples is $m$, the following formula holds:
$$
\mathbb{P}(|R(h)-\hat{R}(h)|>\epsilon)<2|\mathcal{H}| \exp \left(-2 m \epsilon^2\right)
$$
where $R$ and $\hat{R}$ are the expectation and empirical losses, respectively.
\end{lemma}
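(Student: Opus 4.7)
The plan is to establish this standard uniform convergence bound by combining a concentration inequality at each fixed hypothesis with a union bound over the finite class $\mathcal{H}$. The empirical risk $\hat{R}(h)$ is an average of $m$ i.i.d.\ per-sample losses, and the loss relevant to the surrounding proof (the mis-matching indicator between the predicted and the observed action) is $\{0,1\}$-valued and therefore bounded. This is precisely the regime in which Hoeffding's inequality yields the tightest elementary concentration of the empirical mean $\hat{R}(h)$ around its expectation $R(h)$, giving the leading factor $\exp(-2m\epsilon^2)$ that we need.

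First, I would fix an arbitrary $h \in \mathcal{H}$ and apply the two-sided Hoeffding inequality to the $m$ bounded i.i.d.\ variables whose mean is $\hat{R}(h)$. This produces the pointwise deviation bound
\begin{equation}
\mathbb{P}\bigl(|R(h) - \hat{R}(h)| > \epsilon\bigr) \le 2\exp(-2m\epsilon^2),
\end{equation}
which depends only on the boundedness of the per-sample loss and the independence of the samples, not on the identity of $h$. Second, to promote this pointwise statement into the uniform statement that the proof of Theorem \ref{thm:influence} actually invokes, I would apply a union bound over the $|\mathcal{H}|$ hypotheses:
\begin{equation}
\mathbb{P}\bigl(\exists\, h \in \mathcal{H} : |R(h) - \hat{R}(h)| > \epsilon\bigr) \le \sum_{h \in \mathcal{H}} \mathbb{P}\bigl(|R(h) - \hat{R}(h)| > \epsilon\bigr) \le 2|\mathcal{H}|\exp(-2m\epsilon^2),
\end{equation}
which is exactly the inequality asserted in the lemma.

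There is essentially no serious obstacle here, since the result is a textbook combination of Hoeffding plus union bound over a finite class. The only modelling point worth flagging explicitly is that the loss must be bounded in $[0,1]$ for the constant $2m\epsilon^2$ inside the exponent to be correct; in the pre-trained-policy setting of Theorem \ref{thm:influence} the loss is a mis-matching indicator, so this is automatic, and the resulting bound can be inverted (setting $2|\mathcal{H}|\exp(-2m\epsilon^2) = \delta$ and solving for $m$) to plug directly into the reduction-to-absurdity argument already set up around statements (1) and (2) in the proof of Theorem \ref{thm:influence}.
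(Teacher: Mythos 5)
Your proof is correct: the Hoeffding-plus-union-bound argument is the standard derivation of this finite-class uniform convergence bound, and it is exactly the route taken in the textbook the paper cites — the paper itself states the lemma without proof. Your observation that the per-sample loss must be bounded in $[0,1]$ (which holds automatically for the $\{0,1\}$-valued mismatch indicator used in Theorem \ref{thm:influence}) is the only modelling point that needs flagging, and you flagged it.
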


For statement (1), since the prediction error of $\hat{\pi}_E$ is larger than $\epsilon$, the expectation loss $R(\hat{\pi}_E)$ is larger than $\kappa+\epsilon(1-2 \kappa)$. If the empirical loss $\hat{R}(\hat{\pi}_E)$ is smaller than $\kappa+\frac{\epsilon(1-2 \kappa)}{2}$, then $|R(\hat{\pi}_E)-\hat{R}(\hat{\pi}_E)|$ should be larger than $\frac{\epsilon(1-2 \kappa)}{2}$. At the same time, according to Lemma \ref{lemma:pro3}, when the sample number $m$ is larger than $\frac{2 \log \left(\frac{2|\mathcal{H}|}{\delta}\right)}{\epsilon^2(1-2 \kappa)^2}$, we have $\mathbb{P}\left(|R(\hat{\pi}_E)-\hat{R}(\hat{\pi}_E)|>\frac{\epsilon(1-2 \kappa)}{2}\right)<\delta$.

For statement (2), the expectation loss of $h^*$ is $\kappa$, i.e., $R\left(h^*\right)=\kappa$. If the empirical loss $\hat{R}\left(h^*\right)$ is larger than $\kappa+\frac{\epsilon(1-2 \kappa)}{2}$, then $\mid R\left(h^*\right)-$ $\hat{R}\left(h^*\right) \mid$ should be larger than $\frac{\epsilon(1-2 \kappa)}{2}$. According to Lemma \ref{lemma:pro3}, when the sample number $m$ is larger than $\frac{2 \log \left(\frac{2|\mathcal{H}|}{\delta}\right)}{\epsilon^2(1-2 \kappa)^2}$, we have $\mathbb{P}\left(\left|R\left(h^*\right)-\hat{R}\left(h^*\right)\right|>\frac{\epsilon(1-2 \kappa)}{2}\right)<\delta$.

As a result, both of the above statements hold with the probability smaller than $\delta$, which implies that the prediction error of $\hat{\pi}_E$ is smaller than $\epsilon$ with the probability larger than $1-\delta$.

\section{Proof of Theorem \ref{thm:generalization}}\label{app:proth4}
\begin{lemma}\label{lemma}
(Proposition A.8 of Agarwal et al. \cite{alek2019reinforce}). Let $z$ be a discrete random variable that takes values in $\{1, \ldots, d\}$, distributed according to $q$. We write $q$ as a vector where $\vec{q}=[\operatorname{Pr}(z=j)]_{j=1}^d$. Assume we have $n$ i.i.d. samples, and that our empirical estimate of $\vec{q}$ is $[\vec{q}]_j=\sum_{i=1}^n \mathbf{1}\left[z_i=j\right] / n$. We have that $\forall {\epsilon}>0$ :
$$
\operatorname{Pr}\left(\|\hat{q}-\vec{q}\|_2 \geq 1 / \sqrt{n}+{\epsilon}\right) \leq e^{-n {\epsilon}^2}
$$
which implies that:
$$
\operatorname{Pr}\left(\|\hat{q}-\vec{q}\|_1 \geq \sqrt{d}(1 / \sqrt{n}+{\epsilon})\right) \leq e^{-n {\epsilon}^2}
$$
\end{lemma}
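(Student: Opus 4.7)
The plan is to establish the $\ell_2$ tail bound by applying McDiarmid's (bounded-differences) inequality to the real-valued function $g(z_1,\ldots,z_n) := \|\hat{q}-\vec{q}\|_2$ of the i.i.d.\ samples, and then derive the $\ell_1$ statement as a one-line corollary via Cauchy--Schwarz. This two-stage structure matches the form of the lemma, where the same exponential rate $e^{-n\epsilon^2}$ appears in both inequalities with only an additional $\sqrt{d}$ scaling factor on the deviation in the $\ell_1$ version.

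First I would bound $\mathbb{E}[g]$. Because each empirical coordinate $\hat{q}_j$ is an average of $n$ i.i.d.\ Bernoulli$(q_j)$ indicators, $\mathbb{E}[(\hat{q}_j-q_j)^2] = q_j(1-q_j)/n$; summing and using $\sum_j q_j = 1$ gives $\mathbb{E}[\|\hat{q}-\vec{q}\|_2^2] \leq 1/n$, and Jensen's inequality then yields $\mathbb{E}[g] \leq 1/\sqrt{n}$. Next I would verify the bounded-differences property: if a single coordinate $z_i$ is replaced from value $k$ to value $k'$, only $\hat{q}_k$ and $\hat{q}_{k'}$ change, each by exactly $1/n$, so $\|\hat{q}(z) - \hat{q}(z')\|_2 = \sqrt{2}/n$ and the triangle inequality gives $|g(z)-g(z')| \leq \sqrt{2}/n$. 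McDiarmid's inequality with constants $c_i = \sqrt{2}/n$ then produces
\begin{equation*}
\Pr\bigl(g \geq \mathbb{E}[g] + \epsilon\bigr) \;\leq\; \exp\!\left(-\tfrac{2\epsilon^2}{\sum_{i=1}^n c_i^2}\right) \;=\; \exp\!\left(-\tfrac{2\epsilon^2}{n \cdot 2/n^2}\right) \;=\; e^{-n\epsilon^2}.
\end{equation*}
Composing with $\mathbb{E}[g] \leq 1/\sqrt{n}$ yields the first claim.

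For the $\ell_1$ statement, the Cauchy--Schwarz inequality on the constant vector $\mathbf{1}\in\mathbb{R}^d$ gives $\|\hat{q}-\vec{q}\|_1 \leq \sqrt{d}\,\|\hat{q}-\vec{q}\|_2$. Hence the event $\{\|\hat{q}-\vec{q}\|_1 \geq \sqrt{d}(1/\sqrt{n}+\epsilon)\}$ is contained in $\{\|\hat{q}-\vec{q}\|_2 \geq 1/\sqrt{n}+\epsilon\}$, and so inherits the same $e^{-n\epsilon^2}$ upper bound.

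The main obstacle I anticipate is purely bookkeeping: pinning down the bounded-differences constant as exactly $\sqrt{2}/n$ (rather than the looser $2/n$ one would get by crudely bounding two coordinate shifts) so that McDiarmid delivers the desired $-n\epsilon^2$ exponent instead of a weaker $-n\epsilon^2/2$. Everything else — the variance computation, the Jensen step, and the Cauchy--Schwarz corollary — is standard and routine.
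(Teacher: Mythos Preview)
Your proposal is correct: the McDiarmid argument with bounded-differences constant $\sqrt{2}/n$, combined with the variance/Jensen bound $\mathbb{E}\|\hat q-\vec q\|_2\le 1/\sqrt{n}$ and the Cauchy--Schwarz step $\|\cdot\|_1\le\sqrt{d}\,\|\cdot\|_2$, yields exactly the stated exponents. The paper itself does not prove this lemma---it is quoted verbatim as Proposition~A.8 of Agarwal et al.\ and used as a black box in the proof of Theorem~\ref{thm:generalization}---so there is no in-paper argument to compare against; your proof is in fact the standard one found in that reference.
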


\begin{proof}
 Applying Lemma \ref{lemma}, we have that for considering a fixed $s_t$, wp. at least $1-\delta$:
\begin{equation}
 \|\pi(\cdot \mid s_t)-\pi_{\boldsymbol{\omega}}(\cdot \mid s_t)\|_1 \leq h \sqrt{\frac{|\mathcal{A}| \log (1 / \delta)}{n}} 
\end{equation}
where $n$ is the number of expert data used to estimate $\pi_{\boldsymbol{\omega}}(\cdot \mid s_t)$. Then we apply the union bound across all states and actions to get that wp. at least $1-\delta$:
\begin{equation}
\max_{s_t} \|\pi(\cdot \mid s_t)-\pi_{\boldsymbol{\omega}}(\cdot \mid s_t)\|_1 \leq h \sqrt{\frac{|\mathcal{S}||\mathcal{A}| \log (|\mathcal{S}| / \delta)}{n}} 
 \end{equation}
The result follows by rearranging $n$ and relabeling $h$.   
\end{proof}

\begin{remark}
How much counterfactual expert data can we generate using our OILCA framework? Supposing we have $n$ independent state action tuples in the expert data, we run the data augmentation module for $m$ times, which means that we can augment each state to $m$ counterfactual states and subsequently to $m$ corresponding counterfactual actions. Thus, in total, we can obtain $n^m$ counterfactual tuples--an exponential increase for the previously given expert data. Back to Theorem~\ref{thm:generalization}, this demonstrates that our OILCA can effectively enhance the policy's generalization ability.
\end{remark}
\section{Training Details}\label{app:train_detail}

\subsection{Data Generation and Statistics}\label{app:data_generate}

\paragraph{Toy Environment} The dimensions of state and action are both 2. For the exogenous variable, we generate the non-stationary 2D Gaussian data as follows: $u^* \mid c \sim \mathcal{N}\left({\mu}(c), {\operatorname { d i a g }}\left({\sigma}^2(c)\right)\right)$, where $c$ is the class label. $\mu_1(c)=0$ for all $c$ and $\mu_2(c)=\alpha \gamma(c)$, where $\alpha \in \mathbb{R}$ and $\gamma$ is a permutation. The variance $\sigma^2(c)$ is generated randomly and independently across the classes. For the transition function, we use an MLP to generate the next state $s_{t+1}$, such that $s_{t+1}=\mathrm{MLP}(s_t, a_t, u_{t+1})$, where $u_{t+1}$ is the sample of $u$ at timestep $t+1$. For each class of exogenous variables, we generate 1K episodes for the data collection (500 steps per episode). Similar to \textsc{DeepMind Control Suite}, we also define a positive episode if its reward is among the top 20\% episodes, and each of these positives is randomly chosen to constitute $\mathcal{D}_E$ with $\frac{1}{10}$ chance. As a result, we choose 75 episodes in $\mathcal{D}_E$ and 925 episodes in $\mathcal{D}_U$. For the online testing, we can evaluate all the methods on the toy environment with any kind of distribution of the exogenous variable. 

\paragraph{\textsc{DeepMind Control Suite}} \textsc{DeepMind Control Suite} (Figure \ref{fig:dmc}) contains a variety of continuous control tasks involving locomotion and simple manipulation. States consist of joint angles and velocities, and action spaces vary depending on the task. The episodes are 1000 steps long, and the environment reward is continuous, with a maximum value of 1 per step. 
During the collection of offline data, we apply random Gaussian perturbation to the action outputted by the policy. This perturbation is specified in the XML configuration file as an integral part of the environment. Additionally, the distribution of the perturbation differs across different environment initialization (auxiliary variable $c$) due to their initialization seeds. In particular, different seeds correspond to different mean and variance of the Gaussian distribution perturbation via the random number generator. This approach is employed to introduce uncertainty into the environment~\cite{}, thereby aligning with our problem setting.
We define an episode as positive if its episodic return is among the top 20\% episodes; each of these positives is randomly chosen to constitute $\mathcal{D}_E$ with $\frac{1}{10}$ chance. We present the details in Table \ref{tab:data_stat}.

\begin{minipage}{0.48\textwidth}
\begin{table}[H]
    \centering
    \begin{tabular}{lrr}
    \toprule
    \textbf{Task} & \textbf{Total} & $\boldsymbol{\mathcal{D}_E}$ \\
    \midrule
    Cartpole swingup         &   40 &   2 \\
    Cheetah run              &  300 &   3 \\
    Finger turn hard         &  500 &   9 \\
    Fish swim                &  200 &   1 \\
    Humanoid run             & 3000 &  53 \\
    Manipulator insert ball  & 1500 &  30 \\
    Manipulator insert peg   & 1500 &  23 \\
    Walker stand             &  200 &   4 \\
    Walker walk              &  200 &   6 \\
    \midrule
    Reaching        &   600 &   12 \\
    Pushing              &  600 &   13 \\
    Picking         &  600 &   15 \\
    Pick and Place                &  600 &   12 \\
    Stacking2             & 600 &  11 \\
    Towers  & 600 &  13 \\
    Stacked Blocks   & 600 &  13 \\
    Creative Stacked Block             &  600 &   14 \\
    General              &  600 &   12 \\
    \bottomrule
    \end{tabular}
    \vspace{0.3cm}
   \caption{\textbf{Datasets statistics.} The total number of episodes and corresponding number of expert demonstrations ($\mathcal{D}_E$) per task.}\label{tab:data_stat}
    \label{tab:episodes}
\end{table}
\end{minipage}\quad
\begin{minipage}{0.48\textwidth}
\begin{figure}[H]
    \centering
    \includegraphics[width=1\linewidth]{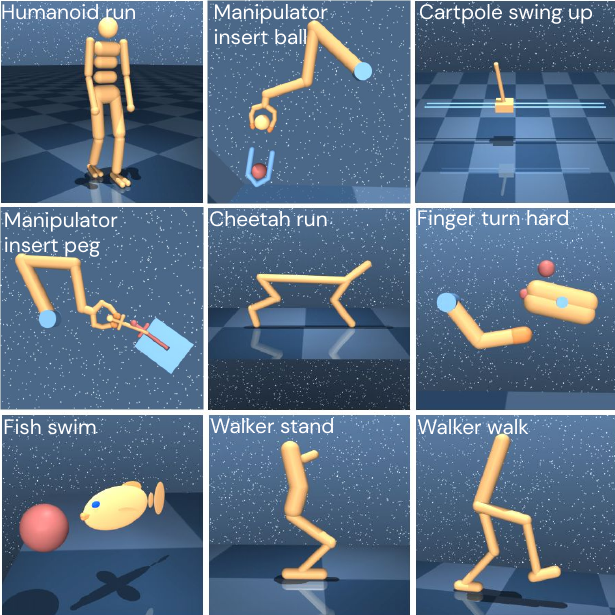}
    \caption{\textsc{DeepMind Control Suite} is a set of popular continuous control environments with tasks of varying difficulties, including locomotion and simple object manipulation.}\label{fig:dmc}
\end{figure}  
\end{minipage}

\paragraph{\textsc{CausalWorld}} \textsc{CausalWorld} provides a combinatorial family of such tasks with common causal structure and underlying factors (including, e.g., robot and object masses, colors, sizes) (Figure \ref{fig:causal_world}). We conduct the offline dataset collection process by using various online behavior policies. We collect the mixed dataset by using three kinds of do-interventions (Figure \ref{fig:do_causal}) on different environment features. And we divide the offline dataset into $\mathcal{D}_E$ and $\mathcal{D}_U$, similar to the \textsc{DeepMind Control Suite}. The detailed statistics about the dataset are presented in Table \ref{tab:data_stat}.

\subsection{Detailed Descriptions of Baselines}\label{app:baseline}
\begin{enumerate}[$\bullet$]
    \item \textbf{BC-exp:} Behavioral cloning on expert data $\mathcal{D}_E$. $\mathcal{D}_E$ owns higher quality data but fewer quantities and thus causes serious compounding error problems to the resulting policy.
    \item \textbf{BC-all:} Behavioral cloning on all data $\mathcal{D}_{\text{all}}$. BC-all can generalize better than BC-exp due to access to a much larger dataset, but its performance may be negatively impacted by the low-quality data in $\mathcal{D}_{\text{all}}$.
    \item \textbf{ORIL}~\cite{zolna2020offline}\textbf{:} ORIL learns a reward function and uses it to solve
an offline RL problem. It suffers from high computational costs and the difficulty of performing offline RL under distributional shifts.
    \item \textbf{BCND}~\cite{sasaki2021behavioral}\textbf{:} BCND is trained on all data, and it reuses another
policy learned by BC as the weight of the original BC objective. Its performance will be worse if the suboptimal data occupies the major part of the offline dataset.
    \item \textbf{LobsDICE}~\cite{kim2022lobsdice}\textbf{:} LobsDICE optimizes in the space of state-action stationary distributions and state-transition stationary distributions rather than in the space of policies.
    \item \textbf{DWBC}~\cite{xu2022discriminator}\textbf{:} DWBC is trained on all data. It mainly designs a new IL algorithm, where the discriminator outputs serve as the weights of the BC loss.
\end{enumerate}

\begin{figure}[!t]
    \centering
    \subfigure[Pushing]{\includegraphics[width=0.245\textwidth]{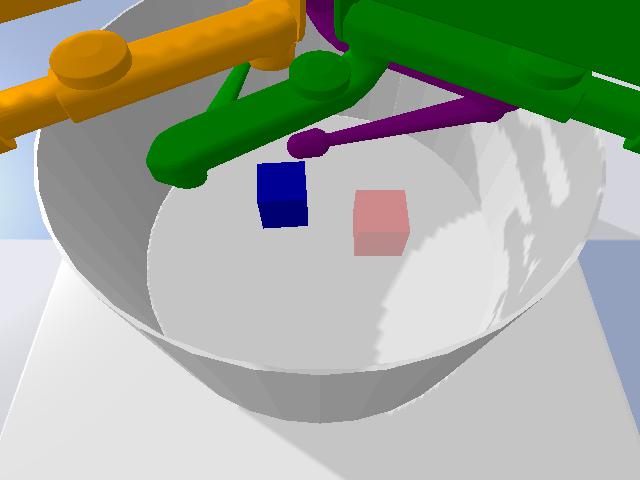}}
    \subfigure[Picking]{\includegraphics[width=0.245\textwidth]{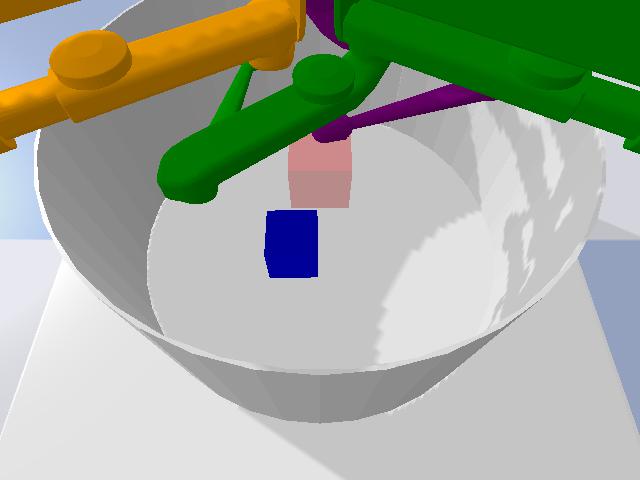}}
    \subfigure[Pick and Place]{\includegraphics[width=0.245\textwidth]{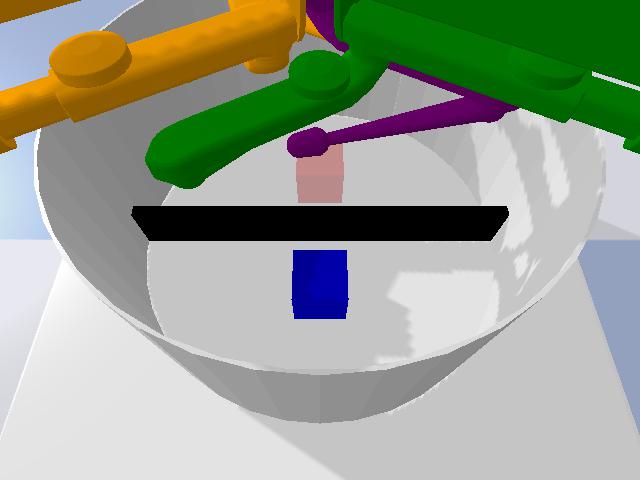}}
    \subfigure[Stacking2]{\includegraphics[width=0.245\textwidth]{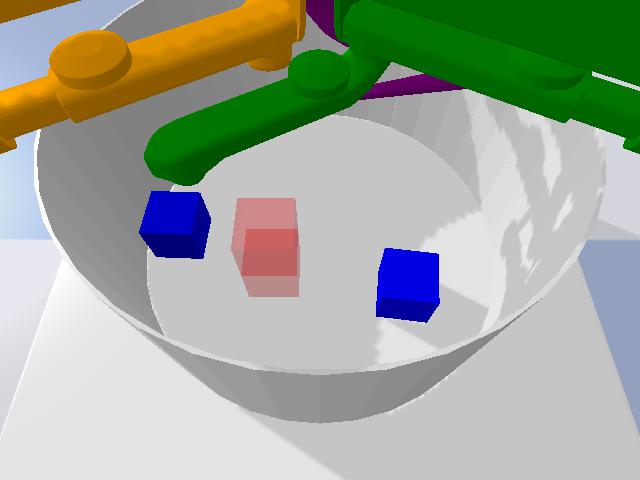}}\\
    \subfigure[Stacked Blocks]{\includegraphics[width=0.245\textwidth]{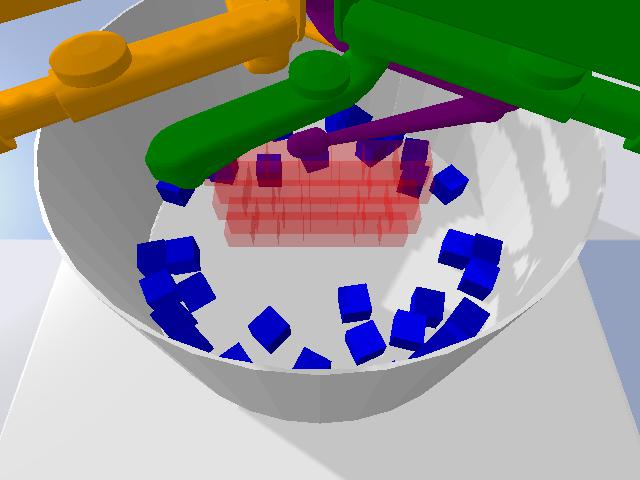}}
    \subfigure[General]{\includegraphics[width=0.245\textwidth]{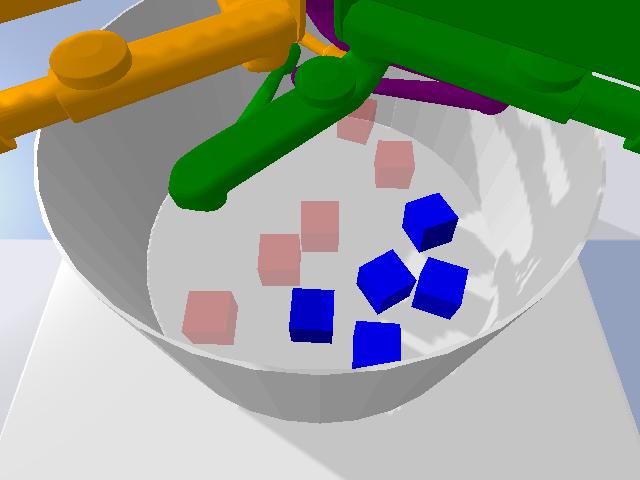}}
    \subfigure[CreativeStackedBlocks]{\includegraphics[width=0.245\textwidth]{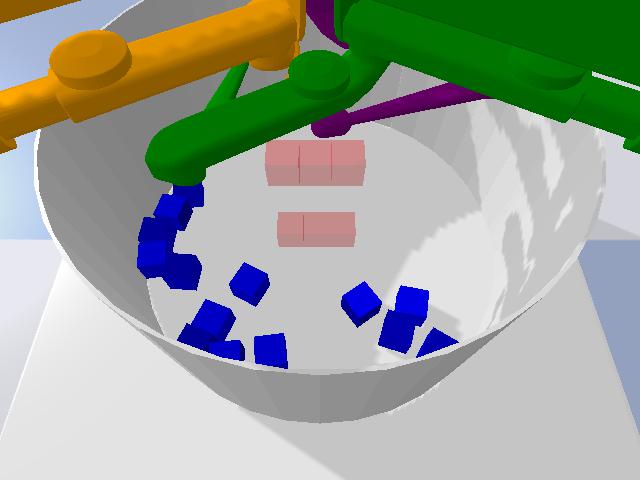}}
    \subfigure[Towers]{\includegraphics[width=0.245\textwidth]{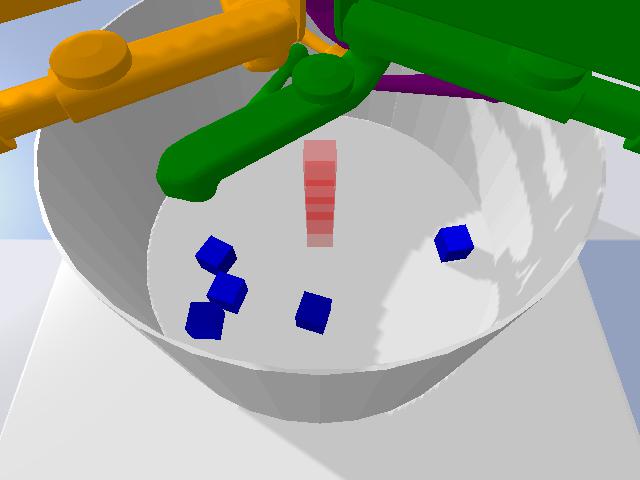}}
    \caption{Example tasks from the task generators provided in the \textsc{Causalworld}. The goal shape is visualized in opaque red, and the blocks are visualized in blue.}
    \label{fig:causal_world}
\end{figure}   

\begin{figure}[h]
    \centering
    \includegraphics[width=0.5\linewidth]{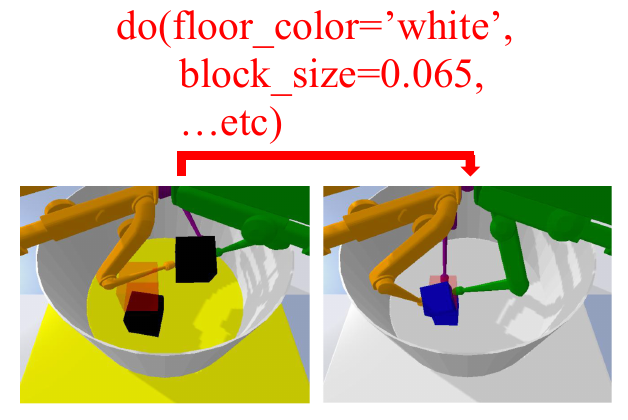}
    \caption{Example of \textit{do}-interventions on exposed variables in \textsc{CausalWorld}.}\label{fig:do_causal}
\end{figure}

\section{Additional Results}\label{app:result}

\subsection{In-distribution Experiments on \textsc{CausalWorld}}

To further show the in-distribution performance, we supplement the experiments on \textsc{CausalWorld}, in which both training and testing are conducted on space $\mathbf{A}$. The results are shown in Table \ref{tab:cwaa}. In most tasks, our OILCA still achieves the highest average episode return, demonstrating our method's effectiveness across different scenarios. Especially comparing the results in Table~\ref{tab:cw} and Table~\ref{tab:cwaa}, we can notice that the advantage of OILCA for out-of-distribution generalization is more obvious. This proves the strong generalization ability of the counterfactual data augmentation module, which makes the offline imitation learning policy more robust to the data distribution shift. This point is especially significant in out-of-distribution scenarios, where the data distribution shifts more intensely.

\begin{table}[!t]
\renewcommand\arraystretch{1.4}
\caption{Results for in-distribution performance on \textsc{CausalWorld}. We report the average return of episodes (length varies for different tasks) over five random seeds. All the models are trained on \textit{space} $\mathbf{A}$ and tested on \textit{space} $\mathbf{A}$ to show the in-distribution performance \cite{ahmed2020causalworld}. The best results and second best results are \textbf{bold} and \underline{underlined}, respectively. }\label{tab:cwaa}
\resizebox{\linewidth}{!}{
\begin{tabular}{l|ccccccc}
\toprule
 \texttt{Task Name}               & BC-exp & BC-all &ORIL  &BCND  & LobsDICE & DWBC & \textbf{OILCA} \\ \midrule
                                                                                      \textsc{Reaching}                 &    \makecell[c]{353.98\\\scriptsize{$\pm$ 11.48}}    &    \makecell[c]{247.60\\\scriptsize{$\pm$ 15.99}}    &  \makecell[c]{372.39\\\scriptsize{$\pm$ 9.58}}    &   \makecell[c]{358.36\\\scriptsize{$\pm$ 15.45}}   &   \makecell[c]{323.43\\\scriptsize{$\pm$ 10.13}}       &  \makecell[c]{\underline{530.96}\\\scriptsize{$\pm$ 8.70}}    &   \makecell[c]{\textbf{986.19}\\\scriptsize{$\pm$10.27}}   \\ 
                                                                                      \textsc{Pushing}                 &   \makecell[c]{ 331.32\\\scriptsize{$\pm$ 6.36}}     &  \makecell[c]{ 310.62\\\scriptsize{$\pm$ 9.21}}      &  \makecell[c]{364.37\\\scriptsize{$\pm$ 8.36}}    &  \makecell[c]{335.87\\\scriptsize{$\pm$ 9.02}}    &    \makecell[c]{275.38\\\scriptsize{$\pm$ 9.93}}      &  \makecell[c]{\underline{436.22} \\\scriptsize{$\pm$ 5.39}}    &  \makecell[c]{\textbf{579.55}\\\scriptsize{$\pm$ 12.64}}    \\ 
                                                                                      \textsc{Picking}          &  \makecell[c]{394.63\\\scriptsize{$\pm$ 12.98}}      &   \makecell[c]{360.28\\\scriptsize{$\pm$ 8.98}}     &   \makecell[c]{427.39\\\scriptsize{$\pm$ 13.69}}   &  \makecell[c]{381.45\\\scriptsize{$\pm$ 8.63}}    &     \makecell[c]{326.97\\\scriptsize{$\pm$ 12.31}}     &     \makecell[c]{\underline{479.05}\\\scriptsize{$\pm$ 8.57}} &    \makecell[c]{\textbf{648.34}\\\scriptsize{$\pm$ 8.51}}  \\ 
                                                                                      \textsc{Pick and Place}               & \makecell[c]{\underline{453.59}\\\scriptsize{$\pm$ 7.58}}       &  \makecell[c]{355.83\\\scriptsize{$\pm$ 8.47}}      &  \makecell[c]{348.34\\\scriptsize{$\pm$ 11.63}}    &  \makecell[c]{376.34\\\scriptsize{$\pm$ 9.87}}    &  \makecell[c]{ 287.81\\\scriptsize{$\pm$ 10.06}}        & \makecell[c]{448.89\\\scriptsize{$\pm$12.49 }}     &   \makecell[c]{\textbf{588.87}\\\scriptsize{$\pm$ 9.29}}   \\ 
                                                                                      \textsc{Stacking2}          &  \makecell[c]{596.14\\\scriptsize{$\pm$ 15.76}}       &  \makecell[c]{435.12\\\scriptsize{$\pm$ 12.81}}      &   \makecell[c]{467.11\\\scriptsize{$\pm$ 13.19}}   &  \makecell[c]{476.33\\\scriptsize{$\pm$ 5.21}}    &   \makecell[c]{378.3\\\scriptsize{$\pm$ 7.65}}       &   \makecell[c]{\underline{631.75}\\\scriptsize{$\pm$ 8.54}}   &  \makecell[c]{\textbf{920.18}\\\scriptsize{$\pm$ 7.36}}     \\ 
                                                                                      \textsc{Towers}                 & \makecell[c]{723.49\\\scriptsize{$\pm$ 15.82}}       & \makecell[c]{\underline{947.96}\\\scriptsize{$\pm$ 17.56}}       & \makecell[c]{679.93\\\scriptsize{$\pm$ 8.68}}     &  \makecell[c]{680.61\\\scriptsize{$\pm$ 8.57}}    & \makecell[c]{735.79\\\scriptsize{$\pm$ 12.23}}         &   \makecell[c]{915.26\\\scriptsize{$\pm$ 17.97}}   & \makecell[c]{\textbf{1263.94}\\\scriptsize{$\pm$ 8.98}}     \\  
                                                                                      \textsc{Stacked Blocks} &   \makecell[c]{1320.97\\\scriptsize{$\pm$ 19.83}}     &      \makecell[c]{947.96\\\scriptsize{$\pm$ 25.45}}   &     \makecell[c]{1520.62\\\scriptsize{$\pm$ 31.62}}  &   \makecell[c]{1247.96\\\scriptsize{$\pm$ 29.14}}    &        \makecell[c]{958.64\\\scriptsize{$\pm$ 26.56}}   &      \makecell[c]{\underline{2116.51}\\\scriptsize{$\pm$ 32.97}} &   \makecell[c]{\textbf{3210.23}\\\scriptsize{$\pm$43.63 }}    \\ 
                                                                                      \textsc{Creative Stacked Blocks} &  \makecell[c]{684.52\\\scriptsize{$\pm$ 16.69}}      &   \makecell[c]{593.41\\\scriptsize{$\pm$ 26.86}}     & \makecell[c]{758.04\\\scriptsize{$\pm$ 12.70}}     &  \makecell[c]{\underline{933.88}\\\scriptsize{$\pm$ 16.57}}    &         \makecell[c]{601.18\\\scriptsize{$\pm$ 19.42}} &   \makecell[c]{870.29\\\scriptsize{$\pm$ 24.56}}   &  \makecell[c]{\textbf{1476.41}\\\scriptsize{$\pm$ 25.94}}    \\ 
                                                                                      \textsc{General}                  &\makecell[c]{626.15\\\scriptsize{$\pm$20.57}}        &   \makecell[c]{691.37\\\scriptsize{$\pm$17.22}}     &  \makecell[c]{\textbf{1072.05}\\\scriptsize{$\pm$47.26}}    &   \makecell[c]{572.70\\\scriptsize{$\pm$11.28}}   &   \makecell[c]{549.89\\\scriptsize{$\pm$15.31}}       &   \makecell[c]{786.44\\\scriptsize{$\pm$18.52}}   &   \makecell[c]{\underline{964.32}\\\scriptsize{$\pm$ 17.08}}   \\ \bottomrule
\end{tabular}}
\end{table}

\subsection{Combinations with Other Base Offline IL Methods}
To validate that the effectiveness of our method is not restricted by the base offline IL methods, we combine the \underline{C}ounterfactual data \underline{A}ugmentation (CA) part with ORIL and BCND, which are represented as ORIL+CA and BCND+CA, respectively. Also, we conduct corresponding experiments on the benchmarks in this paper, and the results are shown in Table~\ref{tab:dmcw}. From the table, we can observe that the CA module can always help improve policy performance regardless of the base policy choice, which demonstrates its wide applicability. Besides, referring to the results in Table~\ref{tab:dmcw}, Table~\ref{tab:dmc}, and Table~\ref{tab:cw}, we can find that ORIL+CA and BCND+CA outperform all methods without CA's assistance in most tasks, which implies that the simple counterfactual data augmentation may even work better than the complicated learning method designs.


\begin{table}[!t]
\renewcommand\arraystretch{1.4}
\caption{Results for in-distribution performance on part of tasks in \textsc{Deepmind Control Suite} and out-of-distribution generalization on part of tasks in \textsc{CausalWorld}. We report the average return of episodes (length varies for different tasks) over five random seeds. The training and testing procedures follow those introduced in Section~\ref{sec:experiments}. All the results obtained by CA-assisted methods are \textbf{bold} to highlight the effect of the counterfactual data augmentation module.}\label{tab:dmcw}
\resizebox{\linewidth}{!}{
\begin{tabular}{cl|cccccc}
\toprule
\multicolumn{1}{c|}{\texttt{Benchmark}}& \multicolumn{1}{c|}{\texttt{Task Name}}                & ORIL & ORIL+CA & BCND  & BCND+CA  & DWBC & OILCA \\ \midrule
                                                                                     \multicolumn{1}{l|}{\multirow{4}{*}{\begin{tabular}[c]{@{}c@{}} \\\textsc{Deepmind}\\ \textsc{Control} \\ \textsc{Suite}\end{tabular}}} & \textsc{Cartpole Swingup}                 &    \makecell[c]{221.24\\\scriptsize{$\pm$ 14.49}}    &    \makecell[c]{\textbf{426.79}\\\scriptsize{$\pm$ 12.09}}    &  \makecell[c]{243.52\\\scriptsize{$\pm$ 11.33}}    &   \makecell[c]{\textbf{452.68}\\\scriptsize{$\pm$ 12.86}}   &   \makecell[c]{382.55\\\scriptsize{$\pm$ 8.95}}       &  \makecell[c]{\textbf{608.38}\\\scriptsize{$\pm$ 35.54}}       \\ \multicolumn{1}{l|}{} &
                                                                                      \textsc{Cheetah Run}                 &   \makecell[c]{45.08\\\scriptsize{$\pm$9.88 }}     &  \makecell[c]{\textbf{78.44}\\\scriptsize{$\pm$ 6.95}}      &  \makecell[c]{96.06\\\scriptsize{$\pm$ 16.15}}    &  \makecell[c]{\textbf{158.62}\\\scriptsize{$\pm$ 8.85}}    &    \makecell[c]{66.87\\\scriptsize{$\pm$ 4.60}}      &  \makecell[c]{\textbf{116.05}\\\scriptsize{$\pm$ 14.65}}      \\ \multicolumn{1}{l|}{} &
                                                                                      \textsc{Finger Turn Hard}          &  \makecell[c]{185.57\\\scriptsize{$\pm$ 26.75}}      &   \makecell[c]{\textbf{227.94}\\\scriptsize{$\pm$ 15.47}}     &   \makecell[c]{204.67\\\scriptsize{$\pm$ 13.18}}   &  \makecell[c]{\textbf{284.29}\\\scriptsize{$\pm$ 12.03}}    &     \makecell[c]{243.47\\\scriptsize{$\pm$ 17.12}}     &     \makecell[c]{\textbf{298.73}\\\scriptsize{$\pm$ 5.11}}   \\ \multicolumn{1}{l|}{} &
                                                                                      \textsc{Fish Swim}               & \makecell[c]{84.90\\\scriptsize{$\pm$ 1.96}}       &  \makecell[c]{\textbf{156.92}\\\scriptsize{$\pm$ 8.18}}      &  \makecell[c]{153.28\\\scriptsize{$\pm$ 19.29}}    &  \makecell[c]{\textbf{268.56}\\\scriptsize{$\pm$ 6.03}}    &  \makecell[c]{212.39\\\scriptsize{$\pm$ 7.62}}        & \makecell[c]{\textbf{290.29}\\\scriptsize{$\pm$ 10.07}}       \\ \hline 
                                                                                      \multicolumn{1}{l|}{\multirow{4}{*}{\begin{tabular}[c]{@{}c@{}} \\ \textsc{Causal} \\ \textsc{World} \\ \end{tabular}}} &
                                                                                      \textsc{Reaching}          &  \makecell[c]{339.40\\\scriptsize{$\pm$ 12.98}}       &  \makecell[c]{\textbf{652.21}\\\scriptsize{$\pm$ 7.05}}      &   \makecell[c]{228.33\\\scriptsize{$\pm$ 7.14}}   &  \makecell[c]{\textbf{582.44}\\\scriptsize{$\pm$9.07 }}    &   \makecell[c]{479.92\\\scriptsize{$\pm$ 18.75}}       &   \makecell[c]{\textbf{976.60}\\\scriptsize{$\pm$ 20.13}}      \\ \multicolumn{1}{l|}{} &
                                                                                      \textsc{Pushing}                 & \makecell[c]{283.91\\\scriptsize{$\pm$ 19.72}}       & \makecell[c]{\textbf{367.46} \\\scriptsize{$\pm$ 6.31}}       & \makecell[c]{191.23\\\scriptsize{$\pm$ 12.64}}     &  \makecell[c]{\textbf{320.94}\\\scriptsize{$\pm$ 10.37}}    & \makecell[c]{298.09\\\scriptsize{$\pm$ 14.94}}         &   \makecell[c]{\textbf{405.08}\\\scriptsize{$\pm$ 24.03}}       \\  \multicolumn{1}{l|}{} &
                                                                                      \textsc{Picking} &   \makecell[c]{388.15\\\scriptsize{$\pm$ 19.21}}     &      \makecell[c]{\textbf{458.03}\\\scriptsize{$\pm$13.95 }}   &     \makecell[c]{221.89\\\scriptsize{$\pm$ 7.68}}  &   \makecell[c]{\textbf{486.32}\\\scriptsize{$\pm$8.03 }}    &        \makecell[c]{366.26\\\scriptsize{$\pm$ 8.77}}   &      \makecell[c]{\textbf{491.09}\\\scriptsize{$\pm$ 6.44}}    \\ \multicolumn{1}{l|}{} &
                                                                                      \textsc{Pick and Place} &  \makecell[c]{270.75\\\scriptsize{$\pm$ 14.87}}      &   \makecell[c]{\textbf{372.18}\\\scriptsize{$\pm$ 10.74}}     & \makecell[c]{259.12\\\scriptsize{$\pm$ 8.01}}     &  \makecell[c]{\textbf{393.59}\\\scriptsize{$\pm$7.81 }}    &         \makecell[c]{349.66\\\scriptsize{$\pm$ 7.39}} &   \makecell[c]{\textbf{490.24}\\\scriptsize{$\pm$ 11.69}}     \\ 
                                                                                   \bottomrule
\end{tabular}}
\end{table}

\subsection{Performance of changing the auxiliary variable $c$}
To show the influence of the different choices of the auxiliary variable $c$, we conduct additional experiments on the  \textsc{CausalWorld} benchmark. Specially, for the change of $c$'s choice, we apply the similar $do$-interventions to more features (\textit{i.e.} block color, block mass) and fewer features. The performance of our OILCA under different intervened features (different choices of $c$) is shown in Table \ref{tab:c_change}. Specially, $C=1$ represents feature set \texttt{stage\_friction}, $C=2$ represents feature set (\texttt{stage\_friction}, \texttt{floor\_friction}), $C=3$ represents feature set (\texttt{\small stage\_color}, \texttt{\small stage\_friction}, \texttt{\small floor\_friction}), $C=4$ represents feature set (\texttt{\small stage\_color}, \texttt{\small stage\_friction}, \texttt{\small floor\_friction}, \texttt{\small block\_mass}),  $C=5$ represents feature set (\texttt{\small stage\_color}, \texttt{\small stage\_friction}, \texttt{\small floor\_friction}, \texttt{\small block\_color}, \texttt{\small block\_mass}).
\begin{table}[!t]
\renewcommand\arraystretch{1.4}
    \centering
        \caption{Results for under different choice of $c$ on the \textsc{CausalWorld} benchmark (out-of-distribution). We report the average return of episodes (length varies for different tasks) over five random seeds.}\label{tab:c_change}
    \resizebox{\linewidth}{!}{
    \begin{tabular}{l|cccccc}\toprule
\texttt{Task Name}                  & $C=1$ & $C=2$ & $C=3$ & $C=4$ & $C=5$\\ \midrule 
 \textsc{Reaching}               & 928.62 $\pm$ 22.38  & 957.54 $\pm$ 18.39  & 976.60 $\pm$ 20.13 & 985.25 $\pm$ 17.26    & 1037.12 $\pm$ 19.15   \\ 
\textsc{Pushing}               & 389.16 $\pm$ 9.43  & 396.52 $\pm$ 17.29 & 405.08 $\pm$ 24.03 & 426.60 $\pm$ 15.37 & 429.42 $\pm$ 12.28  \\
\textsc{Picking}                 & 462.54 $\pm$ 9.08  & 484.21 $\pm$ 11.37  & 491.09 $\pm$ 6.44   & 522.96 $\pm$ 13.27 & 525.20 $\pm$ 12.28  \\
\textsc{Pick and Place}          & 464.68 $\pm$ 10.27  & 486.74 $\pm$ 8.52  & 490.24 $\pm$ 11.69 & 511.76 $\pm$ 9.05 & 523.46 $\pm$ 15.42 \\
\textsc{Stacking2}               & 794.81 $\pm$ 16.50  & 803.27 $\pm$ 13.26 & 831.82 $\pm$ 11.78 & 867.43 $\pm$ 9.82 & 871.43 $\pm$ 18.19  \\
Towers                  & 972.34 $\pm$ 12.36  & 979.23 $\pm$ 8.72 & 994.82 $\pm$ 5.76 & 1027.16 $\pm$ 17.25 & 1029.37 $\pm$ 8.06  \\
\textsc{Stacked Blocks}          & 2317.48 $\pm$ 74.32 & 2558.35 $\pm$ 42.17 & 2617.71 $\pm$ 88.07 & 2682.76 $\pm$ 69.25 & 2754.39 $\pm$ 82.16 \\
\textsc{Creative Stacked Blocks} & 1226.72 $\pm$ 62.18 & 1297.20 $\pm$ 39.42 & 1348.49 $\pm$ 55.05 & 1468.65 $\pm$ 27.63 & 1486.51 $\pm$ 41.29 \\
\textsc{General}                 & 868.62 $\pm$ 7.65  & 875.55 $\pm$ 19.28 & 891.14 $\pm$ 23.12 & 926.19 $\pm$ 17.34 & 934.74 $\pm$ 16.20 \\ \bottomrule
    \end{tabular}}
\end{table}

From the above Table~\ref{tab:c_change}, we can find that our OILCA can achieve a consistent performance improvement over different baselines under different choices of $c$. This demonstrates that the empirical performance of our method is relatively robust to the selection of this variable $c$. In fact, when increasing the number of intervened features (the number of $c$ choices), we can observe our model can achieve better performance. This is because the policy can learn to adapt to more diverse/uncertain environment configurations during the training phase.

\subsection{Influence of the augmented data}

In order to prove that the performance will not decay when further improving the $D_E/D_U$, we further increase $D_E/D_U$ (larger than 1) and conduct the experiments with three tasks in \textsc{Deepmind Control Suite} of our method OILCA. Moreover, to show the quality of augmented data, we show the performance gap when increasing expert data proportion using two kinds of augmented data: 1) sampling with the policy in the online environment for more true expert data (Expert Data), 2) our counterfactual data augmentation method OILCA (Augmented Data). The experimental results are shown in Table \ref{tab:expert_test}.

\begin{table}[h]
\centering
\caption{Results for the Influence of the augmented data with improving the proportion of augmented data and comparison to the true expert data in \textsc{DeepMind Control Suite} Benchmark.}\label{tab:expert_test}
\resizebox{\linewidth}{!}{
\begin{tabular}{c|cc|cc|cc}
\toprule  \texttt{Task Name} & \multicolumn{2}{c|}{\textsc{Cartpole Swingup}} & \multicolumn{2}{c|}{\textsc{Cheetah Run}} & \multicolumn{2}{c}{\textsc{Cartpole Swingup}} \\
\midrule \textbf{Proportion} & Augmented Data & Expert Data & Augmented Data & Expert Data & Augmented Data & Expert Data \\
\midrule 10\% & 430.21 $\pm$ 13.20 & 441.36 $\pm$ 12.01 & 71.85 $\pm$ 8.26 & 74.56 $\pm$ 3.29 & 261.77 $\pm$ 14.68 & 255.62 $\pm$ 18.29 \\
 30\% & 463.78 $\pm$ 21.95 & 472.92 $\pm$ 7.62 & 86.44 $\pm$ 13.62 & 82.06 $\pm$ 9.36 & 269.85 $\pm$ 13.39 & 272.18 $\pm$ 12.25\\
 50\% & 502.81 $\pm$ 20.76 & 520.15 $\pm$ 15.43 & 92.60 $\pm$ 16.51 & 89.21 $\pm$ 12.98 & 276.12 $\pm$ 9.82 & 285.48 $\pm$ 8.36\\
 70\% & 557.90 $\pm$ 16.62 & 562.89 $\pm$ 20.47 & 105.57 $\pm$ 11.29 & 111.27 $\pm$ 11.56 & 283.69 $\pm$ 12.71 & 295.83 $\pm$ 13.48 \\
 90\% & 589.01 $\pm$ 38.29 & 593.37 $\pm$ 16.81 & 113.12 $\pm$ 9.25 & 118.32 $\pm$ 15.27 & 288.27 $\pm$ 7.09 & 306.26 $\pm$ 10.81\\
 100\% & 608.38 $\pm$ 35.54 & 621.80 $\pm$ 9.26 & 116.05 $\pm$ 14.65 & 128.07 $\pm$ 8.31 & 298.73 $\pm$ 5.11 & 303.51 $\pm$ 11.67\\
 200\% & 596.52 $\pm$ 28.37 & 634.12 $\pm$ 18.29& 106.39 $\pm$ 10.08 & 132.64 $\pm$ 14.24 & 303.64 $\pm$ 12.91 & 311.70 $\pm$ 9.74 \\
 300\% & 612.30 $\pm$ 41.25 & 635.93 $\pm$ 25.15 & 118.51 $\pm$ 15.72 & 125.18 $\pm$ 8.73 & 301.57 $\pm$ 8.30 & 305.42 $\pm$ 14.53\\
 500\% & 601.47 $\pm$ 27.82 & 627.47 $\pm$ 22.86 & 109.96 $\pm$ 9.84 & 129.72 $\pm$ 12.34 & 289.15 $\pm$ 15.27 & 302.15 $\pm$ 12.16\\
 1000\% & 605.81 $\pm$ 31.63 & 629.94 $\pm$ 23.28 & 117.08 $\pm$ 7.69 & 124.80 $\pm$ 9.46 & 295.48 $\pm$ 7.84 & 304.93 $\pm$ 11.19\\
 \bottomrule
\end{tabular}}
\end{table}

From Table \ref{tab:expert_test}, we can find that the performance will converge when the proportion is close to 100\%, and further improving it indeed will not improve the performance obviously. This can be explained by the results that augmenting too much data can hardly bring additional effective information gain to the learned policy. Moreover, our augmented counterfactual data behaves slightly worse than augmentation with true expert data under most proportions, though achieving obvious improvement over other IL baselines. This shows that the augmented data through our method is high-quality enough. 

\subsection{Learning Curves of OILCA}
We provide the learning curves of OILCA in Figure~\ref{fig:learning_curves_1}. In detail, we deploy our trained policy to the online environment at each epoch and then collect 100 episodes for computing the average episode return. From the figures, we can observe that the policies can converge after 200 epochs in most tasks. The fluctuation of the curves mainly comes from the instability of the base offline IL method.

\begin{figure}[!t]
    \centering
    \subfigure[Cartpole Swingup]{\includegraphics[width=0.32\textwidth]{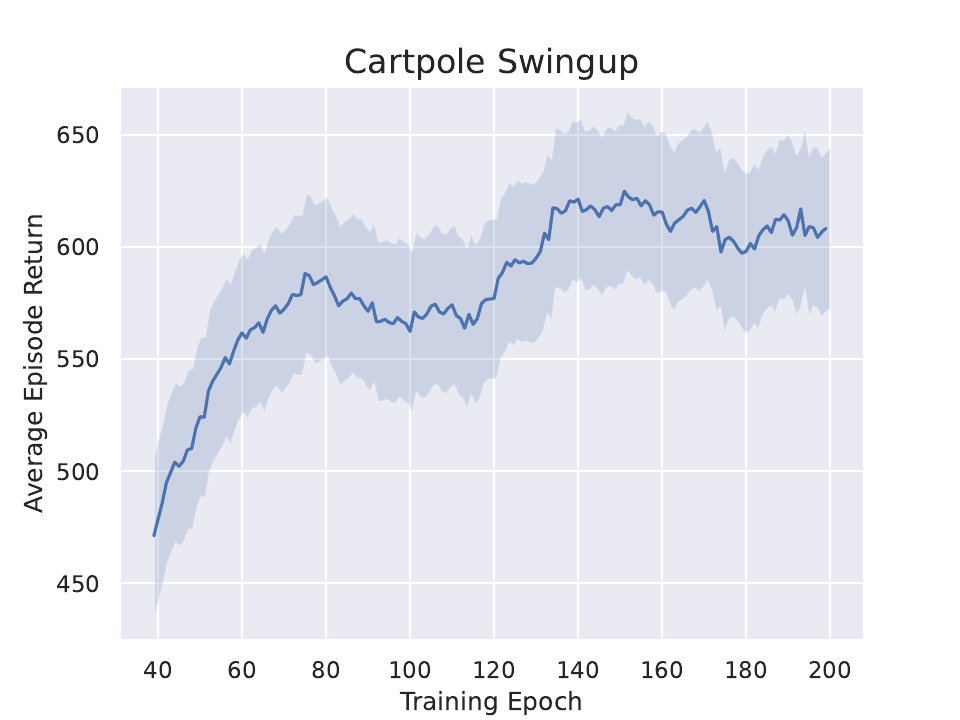}}
    \subfigure[Cheetah Run]{\includegraphics[width=0.32\textwidth]{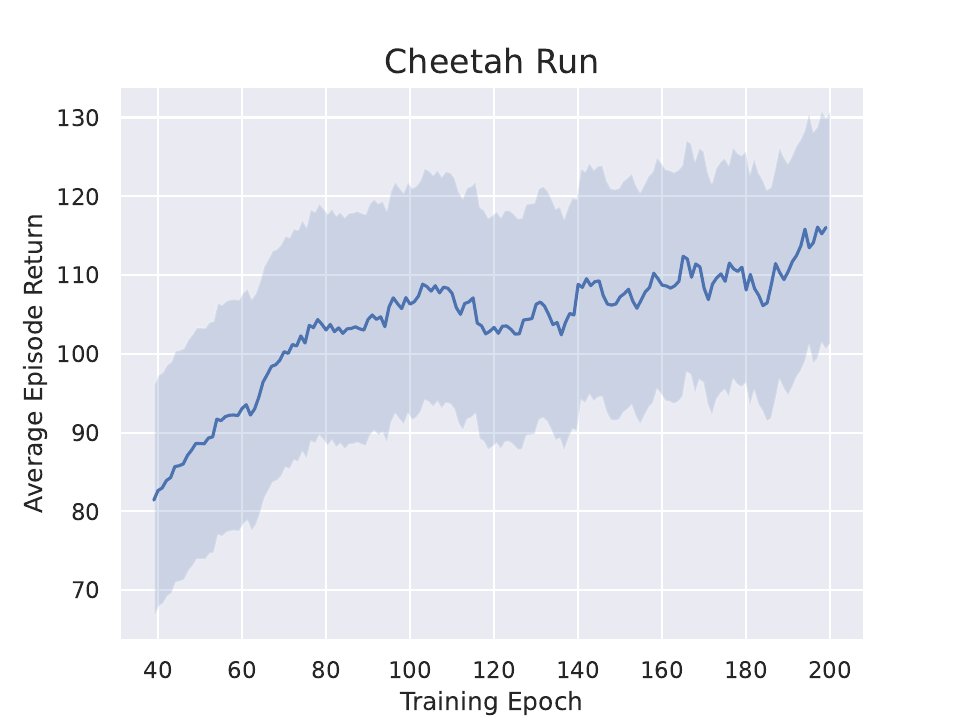}}
    \subfigure[Finger Turn Hard]{\includegraphics[width=0.32\textwidth]{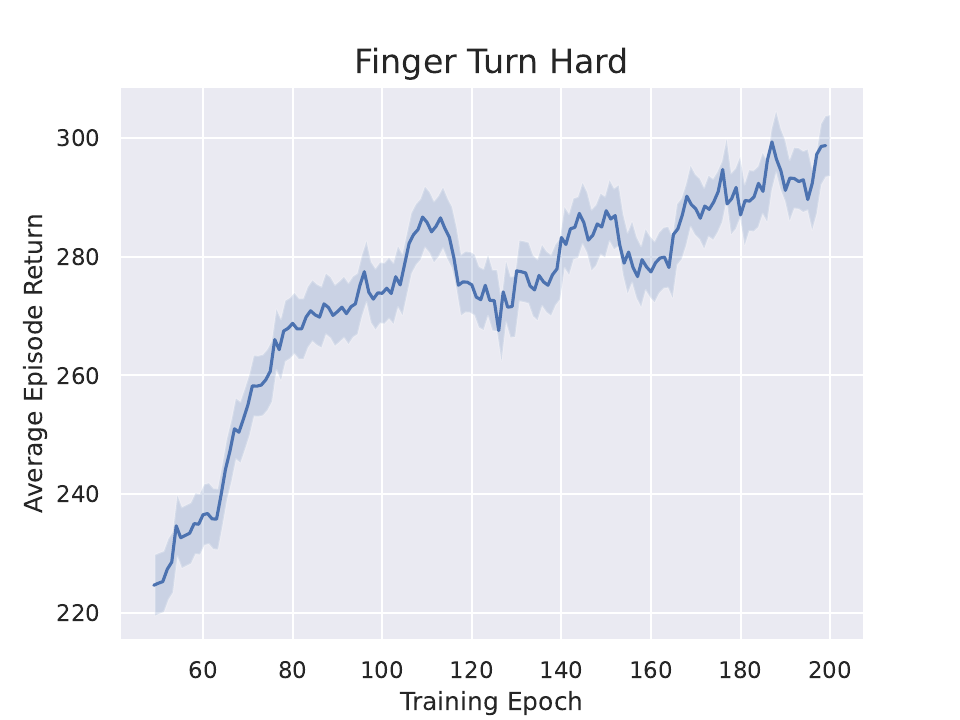}}\\
    \subfigure[Fish Swim]{\includegraphics[width=0.32\textwidth]{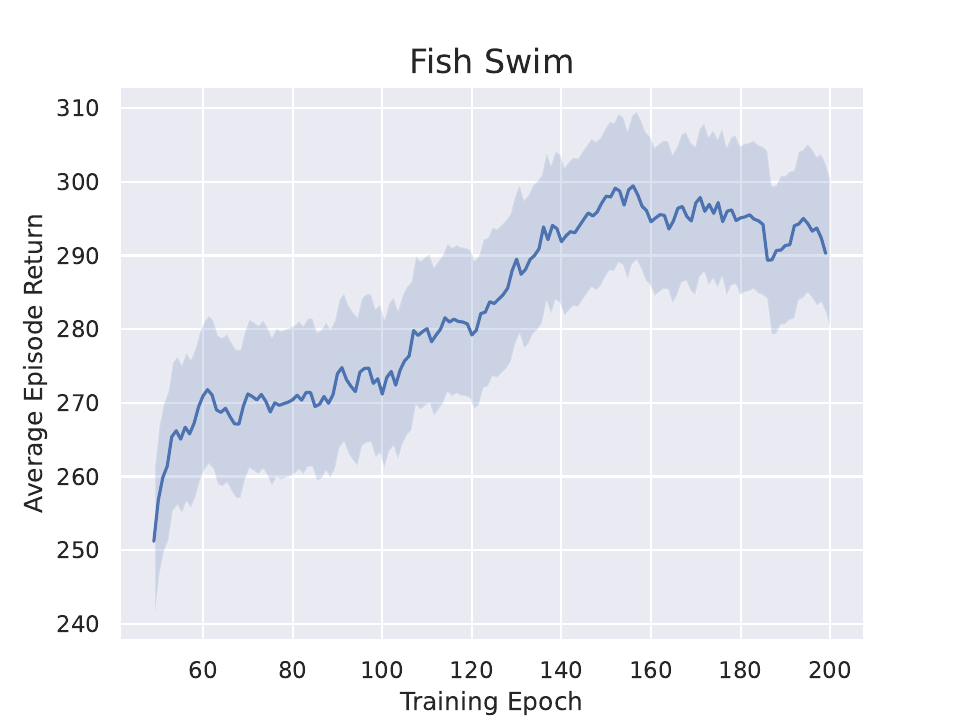}}
    \subfigure[Humanoid Run]{\includegraphics[width=0.32\textwidth]{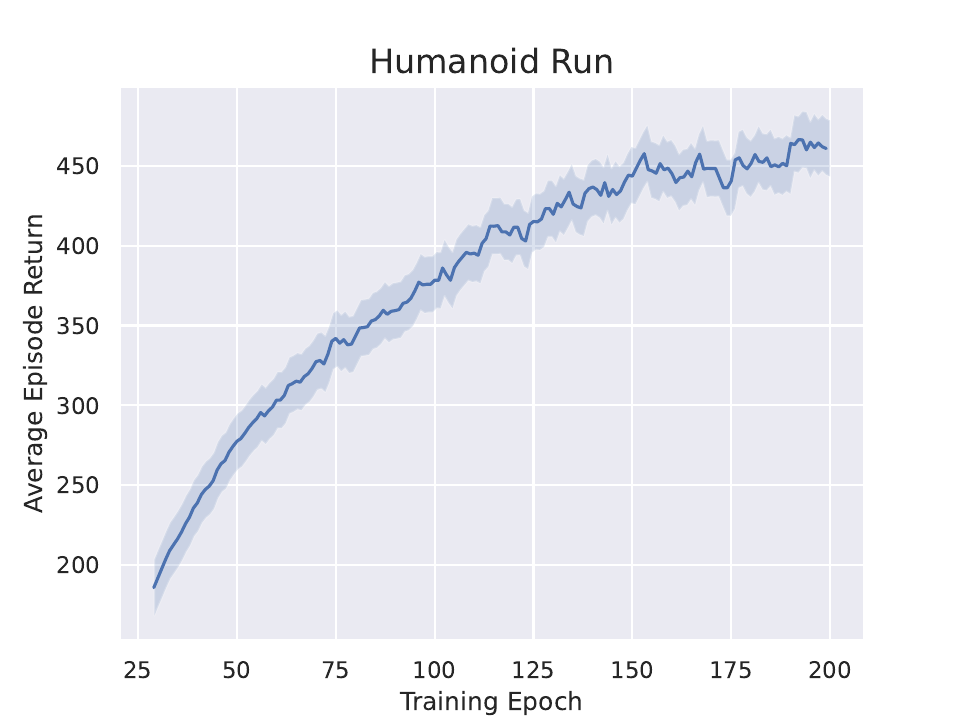}}
    \subfigure[Manipulator Insert Ball]{\includegraphics[width=0.32\textwidth]{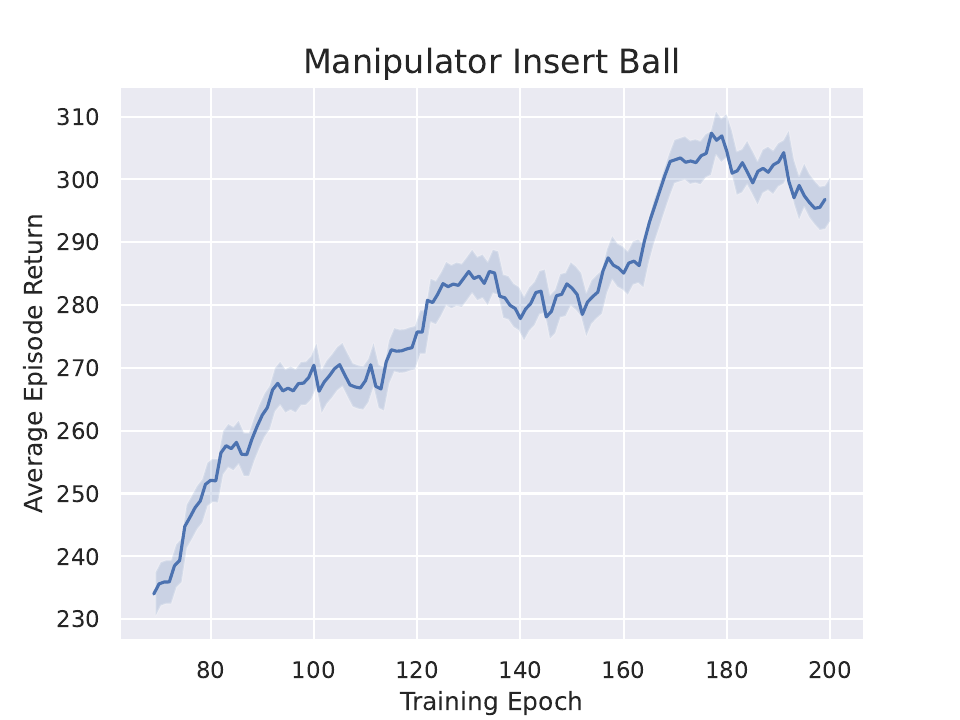}}
    \subfigure[Manipulator Insert Peg]{\includegraphics[width=0.32\textwidth]{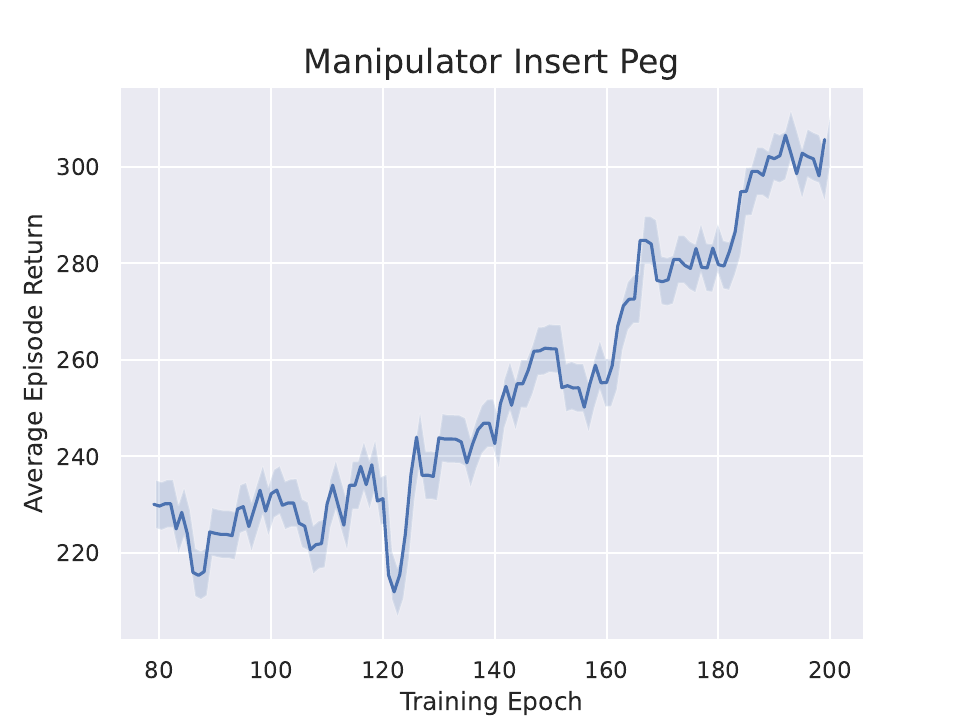}}
    \subfigure[Walker Stand]{\includegraphics[width=0.32\textwidth]{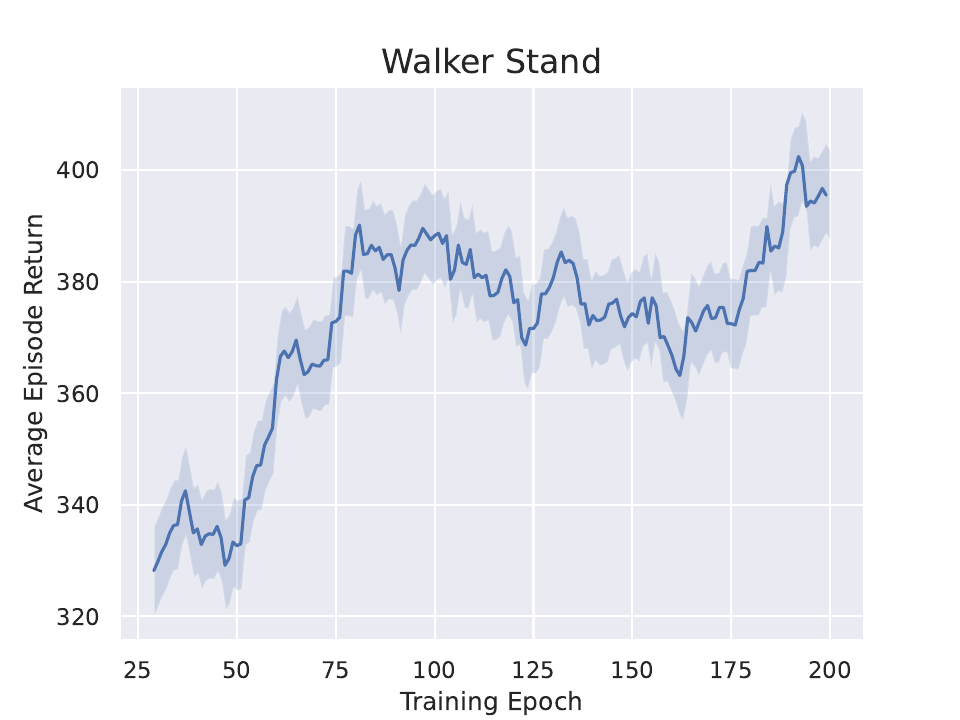}}
    \subfigure[Walker Walk]{\includegraphics[width=0.32\textwidth]{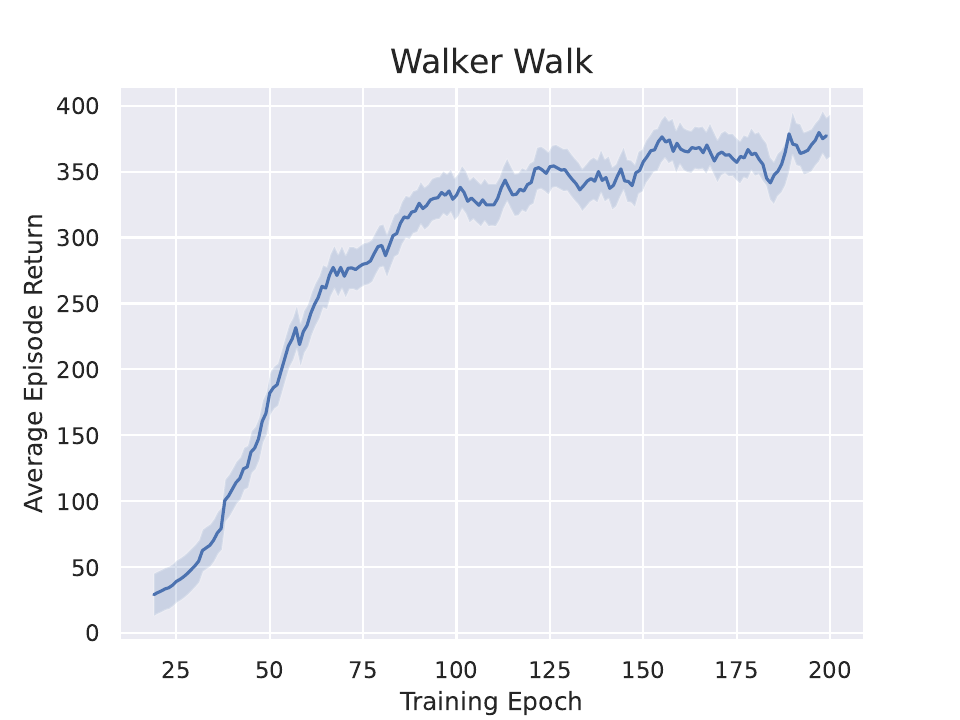}}
    \caption{Learning curves of OILCA on 9 tasks of \textsc{Deepmind Control Suite}.}
    \label{fig:learning_curves_1}
\end{figure}

\section{Limitation Analysis}
We simply analyze the limitations of this work in this section. In this paper, we only provide the theoretical guarantee to the generalization ability of learned policy from the perspective of the counterfactual samples' number. Actually, why the samples generated by the counterfactual augmentation module are more meaningful and can help the learned policy generalize better than samples obtained by other augmentation methods is also worth exploring theoretically.
\end{document}